\definecolor{mydarkblue}{rgb}{0,0.08,0.45}
\newcolumntype{d}[1]{D{.}{.}{#1}}
\newcolumntype{B}[3]{>{\boldmath\DC@{#1}{#2}{#3}}c<{\DC@end}}
\newcommand\mc[1]{\multicolumn{1}{c}{#1}}
\newcommand\boldc[1]{\multicolumn{1}{B{.}{.}{2.4}}{#1}}
\newcommand{\spm}[1]{{\scriptscriptstyle\pm#1}}
\newcommand{\spmi}[3]{\mathit{#1}.\mathit{#2}{\scriptscriptstyle\pm\mathit{#3}}}
\def\eqref#1{equation~\ref{#1}}
\def\1{\bm{1}}
\def\eps{{\epsilon}}
\def\va{{\bm{a}}}
\def\vb{{\bm{b}}}
\def\vc{{\bm{c}}}
\def\vd{{\bm{d}}}
\def\vu{{\bm{u}}}
\def\vv{{\bm{v}}}
\def\vx{{\bm{x}}}
\def\vy{{\bm{y}}}
\def\mA{{\bm{A}}}
\def\mC{{\bm{C}}}
\def\mI{{\bm{I}}}
\def\mK{{\bm{K}}}
\def\mP{{\bm{P}}}
\def\mQ{{\bm{Q}}}
\def\mS{{\bm{S}}}
\def\mT{{\bm{T}}}
\def\mV{{\bm{V}}}
\def\mW{{\bm{W}}}
\def\mX{{\bm{X}}}
\def\mZ{{\bm{Z}}}
\DeclareMathAlphabet{\mathsfit}{\encodingdefault}{\sfdefault}{m}{sl}
\SetMathAlphabet{\mathsfit}{bold}{\encodingdefault}{\sfdefault}{bx}{n}
\newcommand{\R}{\mathbb{R}}
\newcommand{\softmax}{\mathrm{softmax}}
\DeclareMathOperator*{\argmin}{arg\,min}
\def\R{{\mathbb{R}}}
\def\T{{\top}}
\def\sink{{\mathrm{sinkhorn}}}
\theoremstyle{plain}
\newtheorem{theorem}{Theorem}[section]
\newtheorem{proposition}[theorem]{Proposition}
\theoremstyle{definition}
\theoremstyle{remark}
\icmltitlerunning{Unlocking Slot Attention by Changing Optimal Transport Costs}
\begin{document}
\twocolumn[
\icmltitle{Unlocking Slot Attention by Changing Optimal Transport Costs}



\icmlsetsymbol{equal}{*}

\begin{icmlauthorlist}
\icmlauthor{Yan Zhang}{equal,SAIT}
\icmlauthor{David W. Zhang}{equal,UvA}
\icmlauthor{Simon Lacoste-Julien}{SAIT,Mila,CIFAR}
\icmlauthor{Gertjan J. Burghouts}{TNO}
\icmlauthor{Cees G. M. Snoek}{UvA}
\end{icmlauthorlist}

\icmlaffiliation{SAIT}{Samsung - SAIT AI Lab, Montreal}
\icmlaffiliation{UvA}{University of Amsterdam}
\icmlaffiliation{Mila}{Mila, Université de Montreal}
\icmlaffiliation{CIFAR}{Canada CIFAR AI Chair}
\icmlaffiliation{TNO}{TNO}

\icmlcorrespondingauthor{Yan Zhang}{yan@cyan.zone}
\icmlcorrespondingauthor{David W. Zhang}{w.d.zhang@uva.nl}

\icmlkeywords{Machine Learning, ICML}

\vskip 0.3in
]



\printAffiliationsAndNotice{\icmlEqualContribution} 

\begin{abstract}


    Slot attention is a powerful method for object-centric modeling in images and videos.
    However, its set-equivariance limits its ability to handle videos with a dynamic number of objects because it cannot break ties.
    To overcome this limitation, we first establish a connection between slot attention and optimal transport.
    Based on this new perspective we propose \textbf{MESH} (Minimize Entropy of Sinkhorn): a cross-attention module that combines the tiebreaking properties of unregularized optimal transport with the speed of regularized optimal transport. 
    We evaluate slot attention using MESH on multiple object-centric learning benchmarks and find significant improvements over slot attention in every setting.
\end{abstract}
 
\section{Introduction}  \label{sec:introduction}

Suppose we have an image containing two cats and one dog. Given a query like [cat, cat, dog], our task is to provide instance-specific information for each query element, such as their positions in the image.
When constructing a neural network to solve this problem, \emph{cross-attention} is a natural choice to relate the queries to our image \citep{vaswani2017attention,wei2020cross}.
With such a model, the dog can be located perfectly, but our two queries for the cats inevitably end up with an undesirable result: the \emph{average} of the two cats' positions.
The problem is that with our model, multiple copies of the same query element \emph{must} have the same result \cite{zhang2022multisetequivariant}; it is impossible to receive different answers for the same query, even if the context makes it obvious what is desired.


Models that rely on cross-attention, such as slot attention \cite{locatello2020object}, can run into this issue when trying to extract objects from images and other data modalities. 
This has especially been a problem in the video domain due to occlusions and new objects appearing \citep{kipf2022conditional}.
For example, \citet{wu2022slotformer} observe in experiments that when there are two objects but five ``slots'' (equivalent to queries in the previous example), the three slots that do not have a specific object to model become nearly identical.
Once that has happened, the ``cat problem'' from before applies: these very similar slots (queries) must receive  essentially the same information.
Consequently, if multiple new objects appear in the video, these three slots can---at best---all bind to only a single object, which is clearly undesirable.


This issue is present because cross-attention is set-equivariant, a property traditionally considered desirable. However, \citet{zhang2022multisetequivariant} show that set-equivariance is too restrictive when applied to \emph{multisets}: sets with repeated elements allowed, which are prevalent in deep learning (background in \autoref{sec:background}).
This manifests itself in two related problems:
\begin{enumerate}

    \item \textbf{Soft assignments.} The model tends to mix several inputs into each slot (query) rather than making a hard decision of one slot corresponding to exactly one input. This leads to difficulties when the information from each input must be kept distinct.
    \item \textbf{Lack of tiebreaking.} Similar slots are processed similarly, so they will likely contain similar information. Multiple similar slots prefer to capture an average of the relevant inputs rather than each slot capturing a different input, which leads to the problem described earlier where two cats cannot be localized individually.

\end{enumerate}

To avoid these issues, a property called \emph{exclusive multiset-equivariance} is necessary \cite{zhang2022multisetequivariant}.
So far, only models from the Deep Set Prediction Networks family \cite{zhang2019dspn, zhang2022multisetequivariant} are known to have this property.
However, they lack the object-centric inductive bias useful for object-centric learning tasks where the set-equivariant slot attention \citep{locatello2020object} shines.
Fortunately, introducing even a single exclusively multiset-equivariant module in a model is enough to give the entire model this property.
In this paper, we develop a module that enhances cross-attention in order to make the object-centric slot attention exclusively multiset-equivariant, thereby addressing the problems of soft assignments and tiebreaking.

In particular, we will look towards the field of optimal transport for inspiration, which is a natural fit for attention models \cite{sander2022sinkformers}.
This is because optimal transport concerns itself with computing the ``best'' assignment from one set to another \emph{given some pairwise costs}~\citep{villani2009optimal}, while cross-attention \emph{learns} exactly such costs.
This optimal transport perspective is especially useful because some optimal transport algorithms are able to break ties, which makes them relevant for multiset-equivariance. Unfortunately, these algorithms tend to be slow and difficult to parallelize. On the other hand, solutions to the entropy-regularized optimal transport problem~\citep{cuturi2013sinkhorn} are fast but are unable to break ties. 
We aim to combine the best of both worlds.

\paragraph{Contributions.}


\begin{enumerate}
    \item We establish that slot attention \emph{already} uses an approximation of regularized optimal transport  (\autoref{sec:model}).
    We use this to motivate variants of slot attention where either the approximation, or both the approximation and regularization are removed---with the latter being exclusively multiset-equivariant. However, this comes with a significant speed penalty and a lack of gradients, which can hinder learning.
    
    \item To avoid these issues, we introduce the \textbf{MESH} idea: \underline{m}inimize the \underline{e}ntropy of \underline{S}ink\underline{h}orn (\autoref{sec:MESH}).
    It combines the benefits of our two proposed slot attention variants: speed, gradients, and exclusive multiset-equivariance.
    We also show why this is more effective for reducing entropy and maintaining useful gradients than Sinkhorn alone.
    
    \item We evaluate our method in slot attention (SA-MESH\footnote{Samesh is an Indian name meaning ``lord of equality'', fitting for our method addressing a problem with equal elements.}) on two object detection and two unsupervised object discovery tasks (\autoref{sec:experiments}).
    We find that our optimal transport-based variants generally outperform slot attention.
    Crucially, SA-MESH almost always has the best results---often by a significant margin.
    
\end{enumerate}

\section{Background} \label{sec:background}
Multisets are generalizations of sets by allowing repetitions of elements.
In deep learning, sets and multisets are represented as $\R^{n \times c}$ matrices with $n$ being the number of elements and $c$ the feature dimension per element.
The uniqueness property of sets is rarely enforced in deep learning, so most models should be thought of as operating on multisets rather than sets \cite{zhang2022multisetequivariant}.
These models must then be careful to not rely on the arbitrary order of the $n$ elements.
To \emph{guarantee} this, they should satisfy certain equivariances.

\subsection{Permutation equivariances}
The standard definition of permutation-equivariant (\textbf{set-equivariant}) functions $f$ states that a permutation of the input $\mX$ should result in the same permutation of the output \citep{zaheer2017deep}.
With $\Pi$ as the space of $n \times n$ permutation matrices:
\begin{equation}
\begin{split}
     \forall \mX \in \R^{n \times c}, \forall \mP \in \Pi: \\
     f(\mP \mX) = \mP f(\mX).
\end{split}
\end{equation}
However, this means that a set-equivariant function must always produce the same result when there are equal inputs \cite{zhang2022multisetequivariant}: $f([\va, \va]) = [\vc, \vd]$ is not possible for $\vc \neq \vd$.
\citet{zhang2022multisetequivariant} therefore introduce a more appropriate equivariance for multisets, \textbf{multiset-equivariance}:
\begin{equation}
\begin{split}
     \forall \mX \in \R^{n \times c}, \forall \mP_1 \in \Pi, {\color{Emerald}\exists \mP_2 \in \Pi}: \\
     f(\mP_1 \mX) = {\color{Emerald}\mP_2} f(\mX) {\color{Emerald}~\land~\mP_1 \mX = \mP_2 \mX}.
\end{split}
\end{equation}
It states that when there are interchangeable elements in $\mX$ (so $\mP_1 \mX = \mP_2 \mX$ for $\mP_1 \neq \mP_2$), then there are \emph{multiple} permutations of the output that are valid for achieving equivariance.
This relaxation of set-equivariance makes the tiebreaking in $f([\va, \va]) = [\vc, \vd]$ possible.

While this property specifies what happens with equal elements, the continuity of most machine learning models suggests that the primary benefit in practice is with \emph{similar} elements \citep{zhang2022multisetequivariant}: similar elements no longer have to result in similar outputs.
All set-equivariant models are also multiset-equivariant, which means that only models that are \emph{exclusively multiset-equivariant} (multiset-equivariant, but not set-equivariant) are capable of tiebreaking.
Unfortunately, most operations in the multiset learning literature are set-equivariant and thus unable to break ties.

\subsection{Slot attention}
Slot attention (SA)~\citep{locatello2020object} can be used to abstract the contents of an image into a multiset of ``slots''.
Each slot can be thought of as a ``container'' that combines related information from the input into a vector.
The model learns how to route information from the input into these slots---in object-centric learning, these slots often learn to represent individual objects.
It does this by alternating two steps: 1.\ cross-attention between the multisets of input features and slot features to compute updates for each slot, and 2.\ utilizing a GRU~\citep{cho2014properties} to apply the computed updates to the corresponding slots.

The input features are represented by a matrix $\mX \in \R^{n \times c}$, and the slots are randomly initialized as a matrix $\mZ^{(0)} \in \R^{m \times d}$, with $m$ being the number of slots and $d$ the number of dimensions per slot.
Cross-attention in slot attention utilizes the standard key-query-value mechanism to compute updates for each slot. The computed updates are then applied to the corresponding slots using a GRU update. The whole procedure is repeated for a fixed number of times, which is referred to as the number of slot attention iterations.
\begin{align}
    \mQ^{(l)} &= \mZ^{(l)} \mW_Q, \quad
    \mK = \mX \mW_K, \quad
    \mV = \mX \mW_V \\
    \mA^{(l)} &= \texttt{normalize}(\softmax(\mQ^{(l)} {\mK}^\T))
    \label{eq:sa_a}\\
    \mZ^{(l+1)} &= \mathrm{GRU}(\mZ^{(l)}, {\mA^{(l)}} \mV).
\end{align}

In \citet{locatello2020object}, $\softmax$ forces each sum over the $m$ slots to be 1 while \texttt{normalize} forces each sum over the $n$ input elements to be 1.
All operations used in slot attention are set-equivariant with respect to the slots, which makes the model set-equivariant \cite{locatello2020object}.
In this paper, we introduce a module to make slot attention exclusively multiset-equivariant, which allows it to break ties between similar slots.

Note that similar slots can arise due to a multitude of reasons, such as the ones we pointed out in the introduction; it is not necessary for the objects in the input to be similar. This is also why the random initialization of slots in slot attention is not sufficient for tiebreaking, because the slots can become similar \emph{after} the slot updates, after which they can no longer be separated easily.

\section{Connecting slot attention to optimal transport} \label{sec:model}

Optimal transport~\citep{villani2009optimal} identifies the most cost-effective method for redistributing mass between two distributions. This process typically involves sampling from both distributions, calculating the pairwise distances between the samples, and applying an optimal transport algorithm to determine the transport map that minimizes the total cost. In the context of cross-attention, we are comparing two multisets containing the input and slot features rather than two distributions, but the same principles of computing pairwise distances and finding an optimal transport map still apply. 
By making this connection between slot attention and optimal transport more explicit, we can leverage the powerful algorithms of optimal transport to enhance the performance of slot attention.

\vspace{-1mm}
\paragraph{Entropy-regularized optimal transport.}
A critical component when applying cross-attention in slot attention is the normalization of the attention matrix.
\autoref{eq:sa_a} first exponentiates all the entries, then normalizes one dimension of the attention matrix to sum to 1, then the other dimension to sum to 1.
This sequence of operations is also known as applying the Sinkhorn algorithm for a single step.
The Sinkhorn algorithm solves the entropy-regularized optimal transport problem \cite{cuturi2013sinkhorn} by repeatedly alternating these two normalizations, which results in a doubly stochastic matrix at convergence.
We can therefore think of \autoref{eq:sa_a} as approximating this entropy-regularized optimal transport (by using only one Sinkhorn iteration) to determine how the information from the input should be associated with the slots.
This connection between transformers (which use a slightly different normalization) and optimal transport has also been made by \citet{sander2022sinkformers}.

A simple extension is thus to consider slot attention using more than one Sinkhorn iteration, which we will refer to as \textbf{SA-SH}.
In this variant, instead of using a similarity score, a distance function $d$ (e.g.\ L2 norm\footnote{\citet{sander2022sinkformers} point out that for Sinkhorn, this is equivalent to the usual (negative) dot product used in cross-attention.}) is used to compute the attention matrix as follows:
\begin{align}
    C_{ij} &= d(\mQ_{i}, \mK_{j}) \\
    \mA &= \sink(\mC).
\end{align}
Although the proposed extension properly computes the optimal transport map, it remains limited by its set-equivariance and is unable to perform tiebreaking.
One additional detail is that the Sinkhorn algorithm must handle non-square matrices since the number of slots is usually much smaller than the number of inputs;
na\"ively applying the algorithm on such matrices does not converge.
We describe the details of how to handle this in \autoref{app:marginals} as they are not important to the following discussion.

\paragraph{Unregularized optimal transport.}
By using the full Sinkhorn algorithm, SA-SH replaces the usual attention matrix with the transport map of the regularized optimal transport problem. 
This raises the question whether other optimal transport algorithms can be used in the context of slot attention too.
A benefit of optimal transport without regularization is that it can be \emph{exclusively multiset-equivariant}: many algorithms naturally include tiebreaking, which leads to low entropy solutions.
By replacing entropy-regularized optimal transport with unregularized optimal transport, we can  make slot attention exclusively multiset-equivariant to avoid the issues we pointed out in \autoref{sec:introduction}.

We thus propose the \textbf{SA-EMD} (Earth Mover's Distance) variant, wherein we use the EMD algorithm \citep{bonneel2011displacement} that is part of the POT package \cite{flamary2021pot}.
The EMD algorithm provides a sparse solution to the unregularized optimal transport problem and has the ability to break ties. 
However, using the EMD algorithm also presents some challenges. The gradients of unregularized optimal transport problems are piecewise constant, which prevents learning of the cost matrix $\mC$.
Thus, we need to estimate gradients, for which many different techniques exist
\citep{gould2016differentiating,fung2021fixed,bai2019deep,pogancic2020blackbox}.
In practice, we observe that the gradient of the Sinkhorn operator is also a good descent direction for EMD.
We find that we obtain the best empirical results through:
\begin{equation}
    \mA = \mathrm{emd}(\mC) + \sink(\mC). \label{eq:emd}
\end{equation}
Another issue is that the EMD algorithm is relatively slow.
The standard solvers use a network simplex algorithm (a variant of the simplex algorithm for graphs), which is difficult to parallelize efficiently on GPUs.
Can we get the benefits of unregularized optimal transport with its exclusive multiset-equivariance, while still being fast and differentiable like entropy-regularized optimal transport?

\section{MESH: minimizing the entropy of Sinkhorn} \label{sec:MESH}

Previously, we focused on different ways of turning costs $\mC$ into a transport map $\mA$.
Now, we turn our attention to \emph{changing the costs themselves}, followed by using the computational efficiency of the Sinkhorn algorithm to compute the transport map for these modified costs.
A key difference between unregularized optimal transport and entropy-regularized optimal transport is the entropy in the resulting transport map.
\emph{The idea is to change the costs in such a way that even after entropy-regularized optimal transport, the entropy remains low.}
This objective allows the tiebreaking necessary for exclusive multiset-equivariance.
To implement this idea, we aim to find a new cost $\mC'$ that minimizes the entropy $H(P) = - \sum_{i,j} P_{ij} \log P_{ij}$.

\begin{align}
    \mathrm{MESH}(\mC) &= \argmin_{\mC'\in\mathcal{V}(C)} {H(\sink(\mC'))} \label{eqn:me} \\
    \mA &= \sink(\mathrm{MESH}(\mC)).
\end{align}
$\mathcal{V}(\mC)$ denotes a neighborhood around $\mC$, which is implicitly defined by how we implement the $\argmin$.
We refer to this variant of slot attention as \textbf{SA-MESH} (Minimize Entropy of Sinkhorn).
\autoref{eqn:me} changes the cost so that the resulting transport map has low entropy (preferring 0s and 1s) despite the entropy regularization.
The final transport map is calculated from this new cost matrix efficiently using the Sinkhorn algorithm.
In summary, the costs are changed to make the transport map look more like the output of unregularized optimal transport.

To solve this optimization problem, we propose to use gradient descent for a small, fixed number of steps starting from the original cost matrix $\mC$. The last point ensures that the new cost matrix remains close to $\mC$ so that the new optimal transport problem will be similar to the original one.
Differentiating through this can be done with standard automatic differentiation.
While the process described so far does not feature any tiebreaking explicitly (and would in fact struggle to minimize entropy successfully when exactly equal rows or columns are present in $\mC$), we can simply add a small amount of noise at the start of the optimization: 
$\mC'^{(0)} = \mC + \bm\eps, \bm\eps_{ij} {\sim} \mathcal{N}(0, 10^{-6})$.
Another important detail is to normalize the gradient to have a fixed norm: the small amount of noise is amplified (to break ties) only when slots are similar.
This is done without having to resort to large learning rates, which would impact the stability of optimization.
We thus propose to repeat the following for $T$ steps with a learning rate $\lambda$:
\begin{align}
    \mC'^{(t+1)} = \mC'^{(t)} - \lambda \frac{\nabla_{\mC'^{(t)}} \mA'^{(t)}}{\left\lVert\nabla_{\mC'^{(t)}} \mA'^{(t)}\right\rVert} \label{eqn:grad} \\
    \text{with } \mA'^{(t)} = H(\sink(\mC'^{(t)})).
\end{align}
Note that exact minimization is not needed, nor necessarily desirable.
A critical aspect is that $\mC'^{(T)}$ is still related to the initial $\mC$.
For example, if $\mC$ is square, then $\mC' = 10^{23}\mI$ would always ``successfully'' minimize \autoref{eqn:me}, but also solve a problem unrelated to $\mC$ and provide no signal to learn from.
This is why we initialize $\mC'^{(0)}$ as $\mC$ with a small amount of noise.
Note that if this noise is too high (e.g.\ $\bm\eps_{ij} {\sim} \mathcal{N}(0, 1)$), the solution can again lose correspondence with the actual cost matrix $\mC$ that we are trying to compute the transport map for. 
We develop a more complicated version that explicitly enforces $\mC$ and $\mC'$ to be similar (\autoref{app:similarity}) but find that it is not necessary in practice as long as the noise in the initialization is reasonably low.

\subsection{Properties}
\paragraph{Equivariance.}
SA-MESH is now exclusively multiset-equivariant:
it is multiset-equivariant because all the individual operations are multiset-equivariant, but it is not set-equivariant because equal slots will no longer receive the same transport plans due to the tiebreaking from the noise and subsequent optimization (see proof in \autoref{app:proof}).
This gives our method more representational power than standard SA and SA-SH because it is no longer restricted by set-equivariance.

A useful side-effect is that random initialization of slots is no longer necessary. 
Since ties can be broken by this entropy minimization, it is no problem to initialize all slots to be the same vector.
In standard slot attention, the amount of noise in $\mZ^{(0)}$ needs to be just right: too low, and the set-equivariant model has difficulties breaking ties between these similar slots~\citep{wu2022slotformer}; too high, and the model can become unreliable from the noisiness \citep{kipf2022conditional}.
Furthermore, randomly initializing the slots does not prevent them from collapsing to the same values in a later iteration---such as when there are fewer objects to model than slots.
In contrast, a tiny amount of noise in $\mC'$ (as long as it is above machine precision after $\mathrm{sinkhorn}$) is sufficient for tiebreaking and avoiding collapse, and any other effect of the noise can be optimized away through the gradient descent.



\paragraph{Speed.}
SA-MESH offers a significant improvement in computation time when compared to SA-EMD. This is because it only requires the evaluation of the Sinkhorn algorithm for a small number of optimization steps (in our experiments, we found little improvement above four steps). While it may not be as fast as SA-SH, its exclusive multiset-equivariance can speed up learning by making the model more powerful. In many cases, the computation required for SA-MESH is outweighed by other components of the model such as the image processing part that creates the input multiset for SA.
Since each optimization step in MESH needs to solve a similar optimal transport problem, we can reuse the result from the previous iterations to improve the efficiency, as we describe in \autoref{app:sinkhorn}.

\begin{figure*}
    \centering
    \includegraphics[width=0.49\linewidth, trim={0 5mm 0 0}]{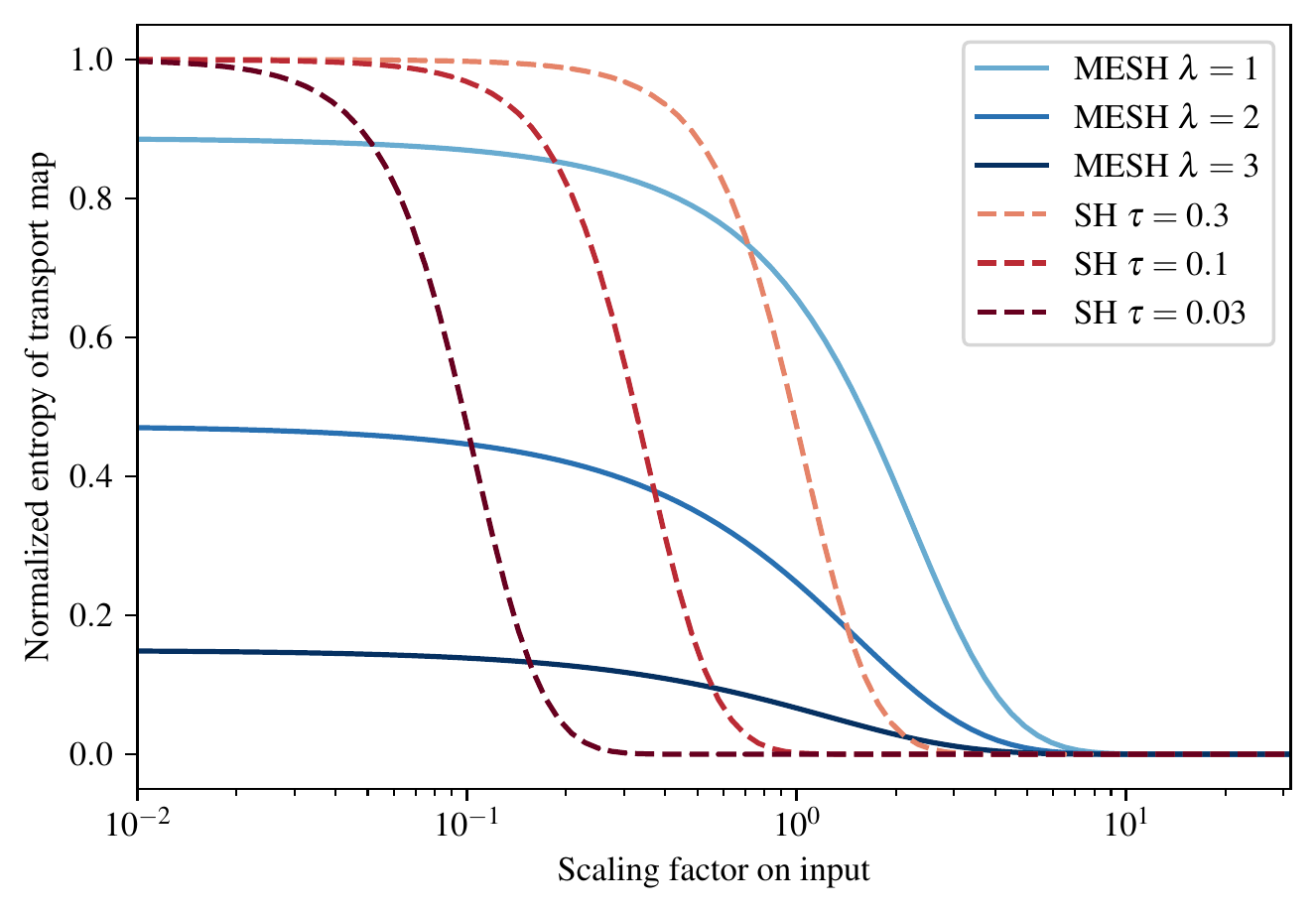}
    \includegraphics[width=0.49\linewidth, trim={0 5mm 0 0}]{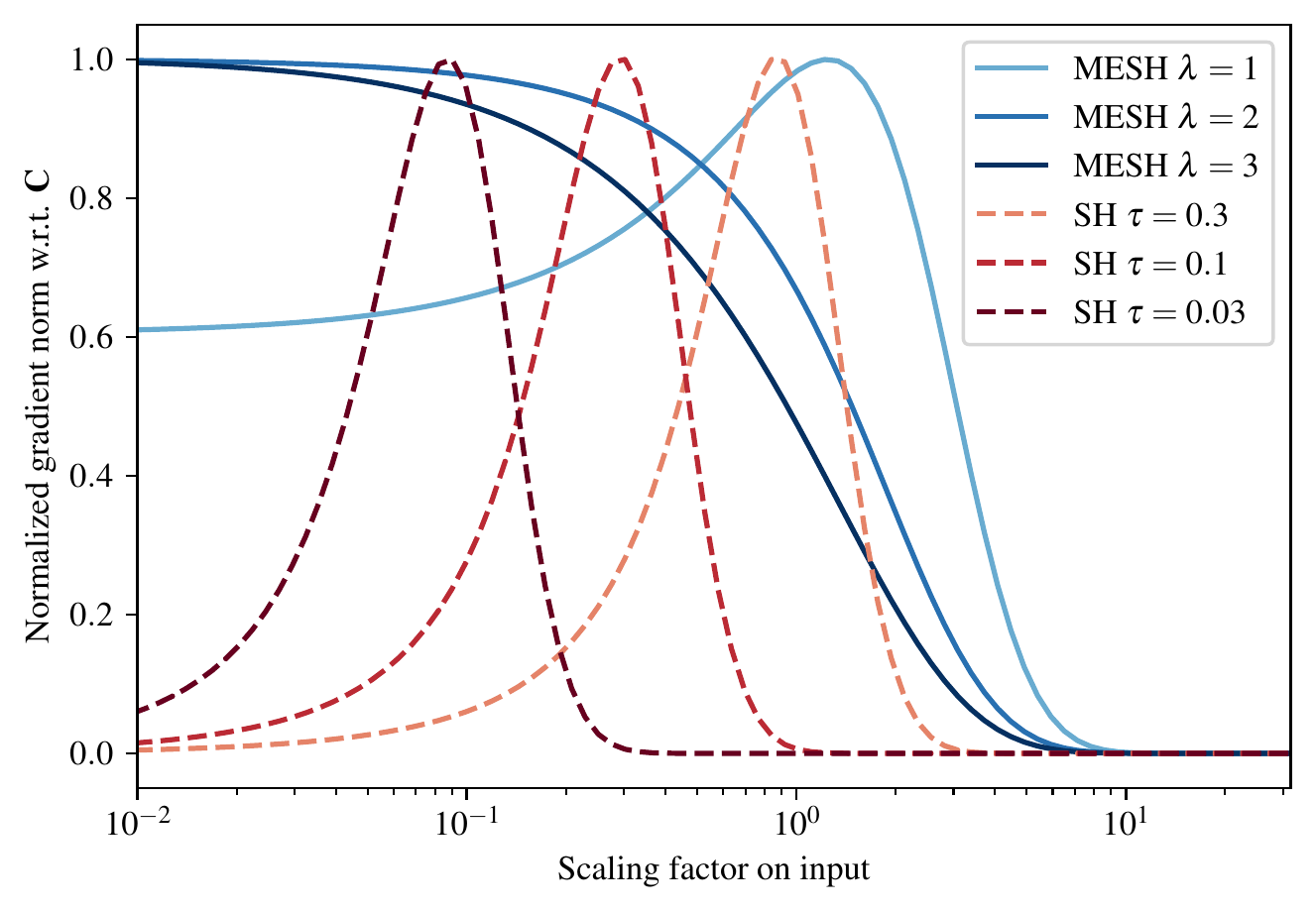}
    \caption{MESH reduces entropy while maintaining reasonable gradients across a large range of attention values, in particular when attention is uncertain (low scaling factors). Meanwhile, Sinkhorn (SH) provides nontrivial gradients in a much smaller range with entropy reduction being ineffective when the scaling is too low. Entropy of transport map (left) and corresponding gradient norm (right) when varying the scaling factor on the input. The entropy is normalized to have 1 as maximum entropy. The gradient norm of each method is normalized to have a maximum of 1, we show the unnormalized gradient norms in \autoref{app:tempsh vs mesh}.}
    \label{fig:grad}
\end{figure*}

\paragraph{Gradients.}
Another benefit over SA-EMD is that gradient computation is simple since we can fully rely on automatic differentiation instead of having to manually estimate gradients for the black-box EMD solver in SA-EMD.
We find experimentally that we do not even need to differentiate the gradient updates in \autoref{eqn:grad} themselves; the gradients of $\mC'^{(T)}$ can simply be passed along to $\mC$ in a straight-through manner \citep{bengio2013straightthrough} without reduction in performance. 
SA-MESH also has benefits in terms of the ``quality'' of gradients over SA-SH, which we explain in the following.

  

\subsection{Comparison to changing temperature}

An alternative for breaking ties is to add noise to the cost matrix (like SA-MESH), but then simply reduce the temperature of the Sinkhorn algorithm, which corresponds to reducing the amount of entropy regularization \citep{cuturi2013sinkhorn}.
For sufficiently low temperatures, this should also be able to map equal inputs to different slots.
How does this much simpler approach compare to MESH, which minimizes entropy by gradient descent?

Sinkhorn with low temperatures can be thought of as analogous to softmax with low temperatures. As the temperature decreases, the behavior of softmax becomes more similar to an argmax, but the gradients become ill-behaved as a result. Similarly, with Sinkhorn, low temperatures may result in gradients that make it difficult or impossible to learn a good cost matrix.
We now make this notion more concrete and show that MESH can reduce entropy \emph{while maintaining ``good'' gradients} for a much larger range of inputs.

In \autoref{fig:grad}, we have the following set-up.
Starting from a $10 \times 10$ identity matrix, we scale it by a varying factor to obtain the cost matrix, then apply either Sinkhorn (for different temperatures $\tau$) or MESH (for different learning rates $\lambda$).
We then measure the entropy of the resulting transport map (left), as well as the norm of the gradient of this entropy with respect to the cost matrix (right).
Gradient norms close to zero slow down learning because the gradients provide little to no information on how to learn the cost matrix.

When the scaling factor is large (e.g.\ $>\! 10$, attention is confident), the behaviors are similar (low entropy and small gradients).
When the scaling factor is small (e.g.\ $<\! 0.1$, attention is not confident), MESH is still always able to reduce entropy while maintaining reasonable gradients.
In contrast, Sinkhorn only has nontrivial gradients in a relatively small range; outside of this range, learning is difficult because the gradient norms become close to 0.
Thus, a trade-off has to be made for Sinkhorn, which is not necessary for MESH: either $\tau$ is high and low scaling factors result in virtually zero gradients (with no reduction to the entropy), or $\tau$ is low and higher scaling factors result in virtually zero gradients instead.
The MESH learning rate $\lambda$ can be used to control the shape of how entropy is reduced, while the Sinkhorn temperature $\tau$ only changes the location and does not increase the range of nontrivial gradients.
In summary, MESH effectively reduces entropy while maintaining well-behaved gradients for a large range of inputs.


\section{Experiments} \label{sec:experiments}
We now experimentally evaluate SA with our optimal transport variants, with a particular focus on comparing SA to SA-MESH.
We open-source all of our code {\url{https://github.com/davzha/MESH}} and provide extra experimental details in \autoref{app:experiment details}.

\begin{table}
    \centering
    \caption{Random objects detection, measured in RMSE divided by standard deviation $\sigma$ of random objects (lower is better). An always-predict-zeros baseline has a normalized RMSE of 1. Median over 5 random seeds.}
    \label{tab:copying}
\resizebox{0.8\linewidth}{!}{
    \begin{tabular}{lccc}
    \toprule
    {Model} & $\sigma=1.0$ & $\sigma=0.1$ & $\sigma=0.01$ \\
    \midrule
    SA      & 0.44 & 0.53 & 0.65 \\
    SA-SH   & 0.27 & 0.32 & 0.41 \\
    SA-EMD  & \textbf{0.23} & 0.52 & 1.04 \\
    SA-MESH & {0.24} & \textbf{0.27} & \textbf{0.31}  \\
    \bottomrule
    \end{tabular}
    }
\end{table}

\begin{table*}[b]
    \centering
    \caption{
        CLEVR property prediction,
        average precision (AP) in \% (mean $\pm$ standard deviation) over 5 random seeds, higher is better.
        SA-MESH improves over all other SA variants at only a small computational cost. Note that the exclusively multiset-equivariant iDSPN is not object-centric, so it is not fully comparable.
        SA (original) results are copied from \citet{locatello2020object}, iDSPN results from \citet{zhang2022multisetequivariant}.
        Models with $\dagger$ use the improvement by \citet{chang2022object}, see \autoref{app:clevr extra} for our results without $\dagger$.
    }
    \label{tab:state}
\resizebox{0.90\textwidth}{!}{
    \begin{tabular}{l *{6}{d{2.4}}c}
        \toprule
        \mc{Model} & \mc{AP\textsubscript{$\infty$}} & \mc{AP\textsubscript{1}} & \mc{AP\textsubscript{0.5}} & \mc{AP\textsubscript{0.25}} & \mc{AP\textsubscript{0.125}} & \mc{AP\textsubscript{0.0625}} & Train time \\

        \midrule
        \textit{iDSPN} \cite{zhang2022multisetequivariant} & \spmi{98}{8}{0.5} & \spmi{98}{5}{0.6} & \spmi{98}{2}{0.6} & \spmi{95}{8}{0.7} & \spmi{76}{9}{2.5} & \spmi{32}{3}{3.9} & \mc{---}\\
        SA (original) \cite{locatello2020object} & 94.3\spm{1.1} & 86.7\spm{1.4} & 56.0\spm{3.6} & 10.8\spm{1.7} & 0.9\spm{0.2} & \mc{---} & \mc{---}\\
        \midrule

        SA$\dagger$ & 94.3\spm{0.4} & 85.7\spm{1.6} & 77.2\spm{1.5} & 53.1\spm{2.7} & 16.7\spm{1.8} & 4.0\spm{0.7} & 2.2 h\\
        SA-SH$\dagger$ & 98.9\spm{0.2} & 97.7\spm{0.5} & 95.2\spm{0.9} & 83.3\spm{0.8} & 38.5\spm{2.0} & 10.0\spm{1.4} & 2.3 h \\
        SA-EMD$\dagger$ & 99.3\spm{0.3} & 98.1\spm{0.4} & 95.9\spm{0.8} & 85.8\spm{1.1} & 42.0\spm{2.0} & 11.4\spm{1.3} & 9.3 h \\
        \textbf{SA-MESH}$\dagger$ & \boldc{99.4\spm{0.1}} & \boldc{99.2\spm{0.2}} & \boldc{98.9\spm{0.2}} & \boldc{91.1\spm{1.1}} & \boldc{47.6\spm{0.8}} & \boldc{12.5\spm{0.4}} & 2.4 h \\
        \bottomrule
    \end{tabular}%
}

\end{table*}

\subsection{Random objects detection}\label{sec:copying}
First, we evaluate the effect of the different SA variants in a simplified object detection setting.
The aim is to assess a model's ability to detect and distinguish similar objects in a controlled setting.
Given a multiset containing $k$ random 32d vectors sampled from $\mathcal{N}(0, \sigma^2 \mI)$ and $h$ zero vectors, the goal is to copy only the $k$ random vectors into the $k$ slots.
In the context of object detection on images, this can be thought of as detecting $k$ different objects, each of which occupies exactly one position in a feature map.

In this setting, we want the object information to be preserved as accurately as possible.
To vary the difficulty of this task, we change the standard deviation $\sigma$ of the random vectors to be copied and measure the error relative to this $\sigma$.
A lower standard deviation corresponds to a harder task because the elements become more similar to each other and to the background zero vectors.

\paragraph{Results.}
\autoref{tab:copying} shows our results for $k=5$ objects and $h=100$ background elements for varying difficulties~$\sigma$.
First, we see that SA-SH and SA-MESH can detect the objects on the hardest setting of $\sigma=0.01$ more accurately than SA on $\sigma=1$.
This demonstrates the general value of our proposed optimal transport perspective in the context of attention.
SA-MESH with its lower entropy outperforms SA-SH: the SA-MESH performance for a specific $\sigma$ is roughly equivalent to SA-SH at a $\sigma$ ten times higher.
This shows the benefits of reducing entropies in the transport map through MESH, which helps objects stay distinct from each other.
Meanwhile, SA-EMD (also with low entropy transport maps) performs similarly to SA-MESH on $\sigma=1$, but degenerates to the always-predict-zeros baseline on $\sigma=0.01$.
We attribute this to the inherently imprecise gradient estimation leading to learning problems.






\subsection{CLEVR property prediction}\label{sec:predict}
Next, we test SA and our proposed variants on a more realistic object detection task.
CLEVR \cite{johnson2017clevr} is a synthetic dataset containing images with up to ten objects in a 3d scene.
Each object is sampled with varying sizes, materials, shapes, and colors.
The task is to predict the multiset of objects with their properties and 3d position.
Following \citet{zhang2019dspn}, we evaluate using average precision (AP) at different distance thresholds for the 3d coordinates of the predicted objects.

\paragraph{Results.}
\autoref{tab:state} shows that SA-MESH achieves the best SA results (and state-of-the-art results on some metrics) while only increasing run time by a small amount.
These results are followed by SA-EMD, which has slightly worse results (likely due to the inherently imprecise gradient estimation) but takes over three times longer to train.
Again, we see that all three optimal transport-based methods greatly outperform the baseline SA, which validates the benefits of the optimal transport perspective in attention.

We believe that there are two reasons why SA-SH performs much better than SA, even though both are set-equivariant. Keep in mind that SA is equivalent to SA-SH with a single Sinkhorn iteration. Performing more Sinkhorn iterations gives a more accurate transport map, which can be interpreted as fully resolving the ``competition'' between slots for inputs. This competition interpretation is what motivated the normalizations (1-step Sinkhorn) in \citep{locatello2020object}.
The other factor is that SA-SH, in order to converge for rectangular cost matrices, requires the use of learned marginals (see \autoref{app:marginals}). These can assist with weighting down the importance of background pixels or unused slots, which makes it easier for the model to learn the $\mW_K$ and $\mW_V$ matrices of slot attention.



Note that we only provide the results for iDSPN (which is also exclusively multiset-equivariant) for context; this is not supposed to be a direct comparison due to the significant difference in approach.
In general, slot attention through the use of attention has the benefit of not needing to compress the input into a single vector (global scene representation) like iDSPN.
The resulting object-centric inductive bias and the relative simplicity have allowed for wider adoption and success of SA over iDSPN \cite{kipf2022conditional,hu2020sas,li2021scouter,sajjadi2022osrt}, which makes our improvements to SA meaningful, even if in this case some metrics are worse than iDSPN.


\subsection{Unsupervised object discovery on images}\label{sec:od}
In this task, the objective is to discover objects without the supervision of what the objects are.
We follow \citet{locatello2020object} and set up an image reconstruction task with slots as the latent bottleneck using SA.
These slots are individually decoded into object-specific images, each comprising the RGB color channels and an alpha mask. These object-specific images are then combined to form the final reconstructed image. 
To evaluate the performance, we compare the per-slot alpha masks to the actual object segmentation masks. 
The goal is thus for image reconstruction with a multiset bottleneck to lead to a decomposition of the scene into individual objects, with each object being modeled by a distinct slot.

We evaluate on the Multi-dSprites dataset, which is the only benchmark presented by~\citet{locatello2020object} that still presented a challenge (possibly due to the presence of highly overlapping objects). Additionally, we test on ClevrTex~\citep{clevrtex}, a synthetic dataset similar to CLEVR that introduces the added challenge of different textures.
In line with prior work~\citep{kipf2022conditional}, we evaluate the Foreground Adjusted Rand Index (FG-ARI) and Foreground mean Intersection over Union (FG-mIoU). To ensure FG-mIoU is permutation-insensitive, we use the Hungarian algorithm to find the best matching between the masks. See~\autoref{app:object discovery details} for more details.
Note that we no longer test SA-EMD because the larger input size compared to~\autoref{sec:predict} makes its training time infeasible.


\begin{table}
    \centering
    \caption{Object discovery on images results in Multi-dSprites in \% (mean $\pm$ standard deviation) over 5 random seeds, higher is better. SA-MESH outperforms the other models and has lower variance.}
    \label{tab:mdsprites}
    
    \resizebox{0.9\linewidth}{!}{
    \begin{tabular}{l*{2}{d{2.4}}}
        \toprule
         Model & \mc{FG-ARI} & \mc{FG-mIoU} \\
         \midrule
         SA \citep{locatello2020object}&91.3\spm{0.3}&\mc{---}\\
         \midrule
         SA & 92.2\spm{0.5} & 24.3\spm{5.4} \\
         SA-SH & 87.2\spm{1.8} & 84.0\spm{3.1} \\
         SA-MESH & \boldc{95.6\spm{0.2}}& \boldc{86.2\spm{2.2}}\\
         \bottomrule
    \end{tabular}
    }
    \end{table}


\begin{table}
    \centering
    \caption{Object discovery on images results in ClevrTex in \% (mean $\pm$ standard deviation) over 5 random seeds, higher is better. SA-MESH outperforms the other models and has lower variance.}
    \label{tab:clevrtex}
    \resizebox{0.65\linewidth}{!}{
    \begin{tabular}{l*{2}{d{2.4}}}
        \toprule
         Model & \mc{FG-ARI} & \mc{FG-mIoU} \\
         \midrule
         SA & 52.8\spm{14.9} & 26.3\spm{14.9}\\
         SA-SH & 70.8\spm{5.8} & 35.3\spm{4.4} \\
         SA-MESH & \boldc{79.0\spm{2.6}} & \boldc{43.2\spm{4.0}}\\
         \bottomrule
    \end{tabular}
    }
\end{table}

\paragraph{Results.}
\autoref{tab:mdsprites} and \autoref{tab:clevrtex} show that on both Multi-dSprites and ClevrTex, SA-MESH achieves significantly higher FG-ARI and FG-mIoU compared to all baselines. SA-SH and SA-MESH improve especially in mIoU, which \citet{clevrtex} argue is a better metric than ARI to evaluate the accuracy of object masks.
\autoref{app:extra experiments} shows two extra ablations on the training setup.

\begin{figure}
    \centering
    \includegraphics[width=0.98\linewidth, trim={15mm 0mm 15mm 0mm}, clip]{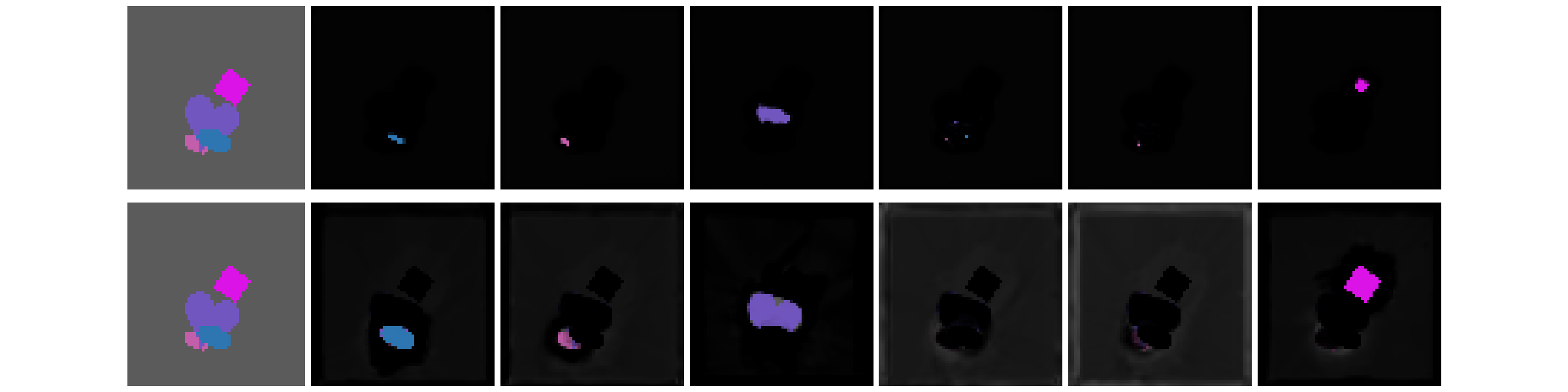}
    \caption{Attention maps (first row) and alpha masks (second row) for each of the six slots in SA-MESH on Multi-dSprites. The attention focuses on only a small region inside each object, but the model still reconstructs the full objects. See \autoref{app:od examples} for more examples.} \label{fig:attn maps and masks}
\end{figure}

\paragraph{Analysis.}
In~\autoref{fig:attn maps and masks}, we observe that minimizing the entropy leads to much sparser attention maps for SA-MESH in comparison to SA:
SA-MESH only attends to a small part within each object, rather than the whole object as is usually the case for SA.
Keep in mind that attention maps are only used to route the required information to each slot, and it is the slots themselves that represent the individual objects.
Attending to a small part of each object is sufficient because the receptive field of the CNN image encoder lets it move information from the edge of an object into the center.
The accuracy of the final alpha masks of SA-MESH shows that this happens successfully.
In \autoref{app:od examples}, we see that SA-MESH attends less to the background than SA on Multi-dSprites.


Similar to \citet{locatello2020object}, we observe that some SA runs fail to separate the individual objects into different slots despite having a low reconstruction error.
In those cases, the attention maps divide the image into separate regions, independent of the image content. 
SA-MESH ensures that the attention maps are sparse, which helps to avoid these kinds of failure modes.
\autoref{app:od examples} shows this difference between SA-MESH and SA on ClevrTex: SA-MESH discovers objects successfully in most cases, and usually, only the background is split into spatial regions.

\begin{table*}[t]
    \centering
    \caption{Video object discovery results on CLEVRER-S and CLEVRER-L in \% (mean $\pm$ standard deviation) over 5 random seeds. SA-MESH outperforms all other models in terms of quality of predicted masks (ARI, mIoU) and achieves high temporal consistency (TC). See \autoref{app:od examples} for example masks.}
    
    \resizebox{0.75\linewidth}{!}{
    \begin{tabular}{l*{6}{d{2.4}}}
        \toprule
        &\multicolumn{3}{c}{CLEVRER-S}&\multicolumn{3}{c}{CLEVRER-L}\\
        \cmidrule(lr){2-4} \cmidrule(lr){5-7}
         Model&\mc{FG-ARI}&\mc{FG-mIoU}&\mc{TC}&\mc{FG-ARI}&\mc{FG-mIoU}&\mc{TC} \\
         \midrule
         SA & 
         78.1\spm{14.0} & 16.8\spm{9.8} & 26.3\spm{22.5} & 
         69.6\spm{14.9} & 12.2\spm{6.6} & 12.8\spm{8.7} \\
         SA fixed noise & 
         71.0\spm{34.0} & 17.1\spm{11.4} & 42.8\spm{19.3} &
         79.4\spm{5.9} & 11.9\spm{6.2} & 18.5\spm{13.0} \\
         SA learned noise & 
         80.2\spm{14.1} & 13.2\spm{4.9} & 21.3\spm{13.5} &
         84.7\spm{5.7} & 10.2\spm{1.2} & 19.4\spm{7.7} \\
         SA-SH & 
         89.3\spm{2.3} & 10.0\spm{2.8} & \boldc{89.7\spm{2.4}} &
         82.9\spm{1.5} & 7.0\spm{0.2} & 26.8\spm{0.5} \\
         SA-MESH & 
         \boldc{93.8\spm{1.0}} & \boldc{44.1\spm{7.1}} & 80.2\spm{10.1} & 
         \boldc{92.9\spm{2.2}} & \boldc{54.4\spm{8.9}} & \boldc{55.4\spm{5.7}} \\
         \bottomrule
    \end{tabular}
    }
    \label{tab:CLEVRER results}
\end{table*}

\subsection{Unsupervised object discovery on video}\label{sec:od video}
As we mentioned in \autoref{sec:introduction}, \citet{wu2022slotformer} observed issues with SA when applied to videos where multiple objects can enter the scene.
To evaluate our method in this scenario, we build two variants of the CLEVRER video dataset~\citep{yi2019clevrer} where the number of visible objects varies over time.
We only use two frames from each video: the first frame, and either the 16th frame (short time difference, CLEVRER-S) or the 128th frame (long time difference, CLEVRER-L).
We do this to evaluate SA without the dynamics prediction component that \citet{wu2022slotformer} are concerned with.
In CLEVRER-S, the total number of objects increases by two or more in 9.5\% of the videos, while in CLEVRER-L, this occurs for 68.8\% of the videos.


We evaluate FG-ARI and FG-mIoU on the two frames individually, which is then averaged.
We also compute a temporal consistency (TC) metric as the fraction of objects that are correctly captured by the same slot (details in \autoref{app:clevrer details}).

\paragraph{Results.}
\autoref{tab:CLEVRER results} shows that SA-MESH outperforms the other models by a significant margin.
The only exception is on CLEVRER-S for temporal consistency---we can close this gap simply by reducing the MESH learning rate $\lambda$ (five run average: 90.0\% FG-ARI, 22.7\% FG-mIoU, 95.1\% TC).
The proposal by \citet{wu2022slotformer} of adding noise to slots to prevent them from collapsing helps SA on CLEVRER-L, but to a lesser extent than SA-MESH, which uses exclusive multiset-equivariance to prevent collapse.
This is evidenced by the much better FG-mIoU of SA-MESH.

\section{Related work}

\paragraph{Object-centric learning.}

Object-centric learning aims to model visual input data in terms of multiple ``objects'' rather than a global representation or grid of feature vectors.
This decomposition can be considered an \emph{abstraction} of the input; reasoning over a small number of objects is intuitively more efficient than over a feature map \citep{ke2022, huang2020srn}.
Scenarios with multiple independent objects are ubiquitous in natural data, so it is desirable to model them well.
\citet{clevrtex} classify object-centric learning methods into three categories:
pixel-space approaches which group related pixels together~\citep{greff2019iodine,pervez2022differentiable}, glimpse approaches which sequentially extract patches from the input~\citep{crawford2019spatially,lin2020SPACE,jiang2020generative}, and sprite approaches which learn a dictionary of object appearances~\citep{monnier2021unsupervised,smirnov2021marionette}.
These are part of the wider research area of factorizing knowledge into smaller, independent parts which can be modeled more easily \citep{goyal2019rim,goyal2021objectfiles,didolkar2021nps}.

We choose to apply MESH on specifically the pixel-space based slot attention \citep{locatello2020object} because of its simplicity in approach (cross-attention with GRU updates), the lack of assumptions on what an object is (which makes it a general technique), and its set-equivariance.
Since \citet{zhang2022multisetequivariant} show a specific limitation with set-equivariance, there is a clear path towards improvement, namely making it exclusively multiset-equivariant.
We accomplish this in this paper, with strong results backing up the benefits of our approach.

\paragraph{Optimal transport.}
Another approximation of optimal transport can be obtained through the Sliced Wasserstein Distance \cite{bonneel2015sliced}.
It performs tiebreaking through the use of numerical sorting and is thus exclusively multiset-equivariant, but it lacks precise 1-to-1 associations between inputs and slots.
This can especially be a problem with varying input sizes.
We tried approaches based on this method as a replacement for standard cross-attention but did not obtain any competitive results.

In a similar direction to Sinkhorn, which performs entropy-regularized optimal transport, \citet{blondel2018smooth} study L2-regularized optimal transport problems.
While their solver is faster than unregularized optimal transport and obtains lower entropy solutions than Sinkhorn, similarly to Sinkhorn the convexity of the problem means that it cannot break ties effectively on its own, even with noise.
In SA-MESH, we can replace Sinkhorn with this method, but we found that it was too slow comparatively.

\section{Discussion}

We introduced several variants of slot attention that can break ties between slots which enables better modeling of objects.
In particular, MESH is a promising method that enhances cross-attention. As a result, it grants slot attention the property of exclusive multiset-equivariance while maintaining learnability and efficiency.


While minimizing entropy in MESH has a nice symmetry with entropy-regularized optimal transport, it is not clear whether a sparse attention map is always desirable.
We try to learn a neural network on the transport map in \autoref{app:MESH objective}, but find no improvements over simply using the entropy; the derivative of the learned objective ends up with a similar shape to that of the entropy, which suggests that entropy is indeed a reasonable choice to minimize for now.
While the experiments on ClevrTex are a small step towards more complicated image data, we do not have any evaluation on real world data, so it is not certain what new problems will present themselves. Recent object-centric learning techniques that are able to scale to real-world scenarios often use more powerful image encoders and decoder architectures with the vanilla slot attention, so it is possible in principle to simply replace SA with SA-MESH in these models. In practice, it is so far uncertain whether any inductive biases introduced by SA-MESH only apply well on simpler synthetic data.

Our experiments show that in certain cases, SA-SH can already provide most of the benefits without the additional complexity of the bi-level optimization in SA-MESH. For example, most of the benefit over SA in \autoref{sec:copying} is already obtained with SA-SH, while SA-MESH only provides a small benefit over SA-SH. On the other hand, there are cases like \autoref{sec:od video} where SA-MESH greatly outperforms SA-SH. We believe that it is important to gain a better understanding of what situations make one preferable over the other.

A limitation of our experiments is that we only evaluate the MESH idea in the context of slot attention, when in reality it is a more general method. For example, it could be used to enhance self-attention in Transformers. 
Another example is that Sinkhorn is used by \citet{pena2022rebasin} for merging the weights of two neural networks together.
MESH could be used as an alternative in this context to replace the Sinkhorn algorithm. In general, we believe that optimal transport will continue to play an important role in deep learning, with MESH being a way of bringing tiebreaking into the picture without paying the usual speed penalty associated with optimal transport.


\section*{Acknowledgements}
The work of DWZ is part of the research programme Perspectief EDL with project number P16-25 project 3, which is financed by the Dutch Research Council (NWO) domain Applied and Engineering Sciences (TTW).
This research was enabled in part by compute resources provided by Mila (\url{mila.quebec}), Calcul Québec (\url{calculquebec.ca}), the Digital Research Alliance of Canada (\url{alliancecan.ca}), and by support from the Canada CIFAR AI Chair Program. Simon Lacoste-Julien is a CIFAR Associate Fellow in the Learning Machines \& Brains program.

\bibliography{bibliography}
\bibliographystyle{icml2023}

\appendix

\section{Marginals in Sinkhorn and EMD}\label{app:marginals}
As we mention in the main text, we need to account for the (typical) case of the number of inputs $n$ and the number of slots $m$ differing, i.e.\ with a cost matrix $\mC \in \R^{m \times n}$.
The problem is that it is impossible to make every row and every column of the transport map sum to 1 when the number of rows and columns is different.
Fortunately, there is standard practice for how to deal with this case in optimal transport \cite{peyre2019ot, cuturi2013sinkhorn,bonneel2011displacement}.
We can define non-negative marginals $\va \in \R^m$ and $\vb \in \R^n$ that specify the row and column sums of the transport map respectively.
If $\sum_i \va_i = \sum_j \vb_j$, then convergence is as normal.

In our case, we learn both $\va = m \cdot \softmax(h_\va(\mZ))$ and $\vb = m \cdot \softmax(h_\vb(\mX))$ with neural networks $h_\va: \R^d \to \R$ and $h_\vb: \R^c \to \R$ that are shared across the $m$ slots or $n$ input elements respectively.
These allow the model to put focus on important input elements (e.g. the inputs corresponding to objects) and ignore unimportant input elements (e.g. the inputs corresponding to the background), as well as put focus on the relevant number of slots.
Since both softmaxes sum to one, we have $\sum_i \va_i = \sum_j \vb_j = m$ so there is no problem with convergence.

For the Sinkhorn algorithm, it now repeatedly alternates normalizing all the rows to sum to $\va$, then all the columns to sum to $\vb$.
For the EMD solver that we use \cite{bonneel2011displacement}, these marginals are standard parameters in the algorithm.
In the main text, we omit these marginals whenever we refer to $\mathrm{sinkhorn}$ or $\mathrm{emd}$ for simplicity of notation.

\section{Enforcing $\mC'$ to be similar to $\mC$}\label{app:similarity}
The following discussion is not critical to understanding the main text, since we find empirically that with the right initialization (e.g. $\mC' = \mC + \bm\eps$ with $\bm\eps_{ij} \thicksim \mathcal{N}(0, 10^{-6})$), learning is not a problem.
We only find that this variant is necessary with an initialization such as $\eps \thicksim \mathcal{N}(0, \mI)$.
As we mention in the main text, the amount of noise is not important as long as it remains above machine precision after applying $\mathrm{sinkhorn}$, which can be easily checked a-priori.

In order to enforce $\mC'$ to be related to $\mC$ more explicitly, we define the following objective instead:
\begin{align}
    \mathrm{MESH}(\mC) = \argmin_{\mC'} [H(\sink(\mC')) \nonumber \\
        \quad\quad\quad + \alpha || \mathrm{sinkhorn}(\mC') \mS - \mathrm{sinkhorn}(\mC) ||^2] \label{eqn:me2}
\end{align}
The second term relates $\mC'$ to $\mC$ directly with a regularization factor $\alpha$.
$\mS$ is a similarity matrix, which we will define shortly.
The idea behind it is to allow costs to be freely changed among similar slots, but disallow this for dissimilar slots.
The aim of $|| \mathrm{sinkhorn}(\mC') \mS - \mathrm{sinkhorn}(\mC) ||^2$ is thus to make sure that $\mathrm{sinkhorn}(\mC')$ looks the same as $\mathrm{sinkhorn}(\mC)$ after allowing weight in the transport map to be moved around among similar slots.

\paragraph{Example}
Consider the case where we have three slots: $\mZ = [\vx, \vx, \vy]$ and three inputs $\mX = [\bm{\alpha}, \bm{\beta}, \bm{\gamma}]$.
Let us assume for this example that the cost matrix prefers associating $\bm{\gamma}$ with $\vy$ and both $\bm{\alpha}$ and $\bm{\beta}$ with $\vx$.
Computing unregularized optimal transport solutions would therefore give us either of two solutions:
\begin{equation}
    \mT_1 =
    \begin{bmatrix}
    1 & 0 & 0 \\
    0 & 1 & 0 \\
    0 & 0 & 1 \\
    \end{bmatrix}
    \quad
    \text{or}
    \quad
    \mT_2 =
    \begin{bmatrix}
    0 & 1 & 0 \\
    1 & 0 & 0 \\
    0 & 0 & 1 \\
    \end{bmatrix}
\end{equation}

However, the Sinkhorn algorithm is unable to break the tie between the two $\vx$ slots, so even with a temperature approaching 0, we obtain the following result:

\begin{equation}
    \mathrm{sinkhorn}(\mC) =
    \begin{bmatrix}
    0.5 & 0.5 & 0 \\
    0.5 & 0.5 & 0 \\
    0 & 0 & 1 \\
    \end{bmatrix}
\end{equation}

Suppose we have a similarity matrix $\tilde{\mS} \in \R^{m \times m}$ ($m$ is the number of slots) that measures pairwise similarities ranging from 0 to 1:
\begin{equation}
    \tilde{\mS} =
    \begin{bmatrix}
    1 & 1 & 0 \\
    1 & 1 & 0 \\
    0 & 0 & 1 \\
    \end{bmatrix}
\end{equation}
The first two slots are similar amongst themselves but dissimilar to the $\vy$ slot.
If we normalize each column of $\tilde{\mS}$ to sum to 1 to obtain $\mS$, then we see the following:
\begin{align}
    \underbrace{
        \begin{bmatrix}
        1 & 0 & 0 \\
        0 & 1 & 0 \\
        0 & 0 & 1 \\
        \end{bmatrix}
    }_{\mT_1}
    \underbrace{
        \begin{bmatrix}
        0.5 & 0.5 & 0 \\
        0.5 & 0.5 & 0 \\
        0 & 0 & 1 \\
        \end{bmatrix}
    }_{\mS}
    &=
    \underbrace{
        \begin{bmatrix}
        0.5 & 0.5 & 0 \\
        0.5 & 0.5 & 0 \\
        0 & 0 & 1 \\
        \end{bmatrix}
    }_{\mathrm{sinkhorn}(\mC)}
    \\
    \underbrace{
        \begin{bmatrix}
        0 & 1 & 0 \\
        1 & 0 & 0 \\
        0 & 0 & 1 \\
        \end{bmatrix}
    }_{\mT_2}
    \underbrace{
        \begin{bmatrix}
        0.5 & 0.5 & 0 \\
        0.5 & 0.5 & 0 \\
        0 & 0 & 1 \\
        \end{bmatrix}
    }_{\mS}
    &=
    \underbrace{
        \begin{bmatrix}
        0.5 & 0.5 & 0 \\
        0.5 & 0.5 & 0 \\
        0 & 0 & 1 \\
        \end{bmatrix}
    }_{\mathrm{sinkhorn}(\mC)}
\end{align}

This means that $\mathrm{sinkhorn}(\mC') = \mT_1$ and $\mathrm{sinkhorn}(\mC') = \mT_2$ are both valid solutions for the minimization of $|| \mathrm{sinkhorn}(\mC') \mS - \mathrm{sinkhorn}(\mC) ||^2$.
Note that any other permutation matrix for $\mT$ (i.e.\ one where there is not a 1 in the bottom right corner) would not be a valid solution.
This restricts \autoref{eqn:me} to only consider transport maps that are convex combinations of $\mT_1$ and $\mT_2$ for this example, with the entropy minimization preferring $\mT_1$ and $\mT_2$ specifically.
The small amount of noise in the $\mC'$ initialization arbitrarily makes it prefer one of the two.

\paragraph{Definition}
We define the similarity matrix $\tilde{S}_{ij} = g(\mZ_i, \mZ_j)$, where $g: \R^c \times \R^c \to \R$ is a small neural network that takes pairs of slots as input and produces a similarity score as output.
We then normalize each column of $\tilde{\mS}$ to sum to 1 by applying softmax on each column.
\begin{equation}
    \mS = \softmax(\tilde{\mS})
\end{equation}
If we set up a training task for the example described above, we observe that $\mS$ is learned to be virtually the same as the $\mS$ we use in the example.

\section{Multiset-equivariance of SA-SH, SA-EMD, and SA-MESH}\label{app:proof}
First, we show that SA-SH is set-equivariant, and therefore not exclusively multiset-equivariant.

\begin{proposition}
SA-SH is set-equivariant.
\end{proposition}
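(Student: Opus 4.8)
The plan is to show that a single SA-SH iteration is equivariant under permutations of the \emph{slots} and then invoke the fact that equivariance is closed under composition, so that iterating the fixed number of slot-attention steps preserves the property. Fix a permutation matrix $\mP \in \Pi$ acting on the $m$ slots and replace $\mZ^{(l)}$ by $\mP \mZ^{(l)}$; the goal is to show the updated slots become $\mP \mZ^{(l+1)}$.

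First I would trace how the permutation propagates through the pre-Sinkhorn quantities. Since $\mQ^{(l)} = \mZ^{(l)} \mW_Q$ and right-multiplication commutes with row permutation, $(\mP \mZ^{(l)}) \mW_Q = \mP \mQ^{(l)}$, whereas $\mK$ and $\mV$ depend only on $\mX$ and are unchanged. The cost $C_{ij} = d(\mQ_i, \mK_j)$ then has its rows permuted, $\mC \mapsto \mP \mC$, because permuting the query rows merely reindexes the first argument of $d$. The learned row marginals $\va = m \cdot \softmax(h_\va(\mZ))$ are computed per slot by a shared network, so they transform as $\va \mapsto \mP \va$, while the column marginals $\vb$ (from $\mX$) stay fixed.

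The crux is the permutation-equivariance of the Sinkhorn operator with its marginals, namely $\sink(\mP \mC;\, \mP \va,\, \vb) = \mP\, \sink(\mC;\, \va,\, \vb)$, which I would prove by induction on the number of Sinkhorn iterations. A row-normalization rescales row $i$ to sum to $\va_i$; applied to $\mP \mC$ with the permuted marginal $\mP \va$, it rescales the permuted rows by the correspondingly permuted targets, giving $\mP$ times the row-normalized matrix. A column-normalization rescales column $j$ to match $\vb_j$, and since a column sum is invariant to the order of its entries, column-normalizing $\mP \mC'$ yields $\mP$ times the column-normalized $\mC'$. Composing the two steps preserves the $\mP$ prefactor, and the base case is immediate, so the full alternating scheme is equivariant and $\mA \mapsto \mP \mA$.

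Finally, $\mA \mV \mapsto (\mP \mA)\mV = \mP(\mA \mV)$ since $\mV$ is unchanged, and the GRU acts identically and independently on each slot row, so $\mathrm{GRU}(\mP \mZ^{(l)}, \mP(\mA \mV)) = \mP\, \mathrm{GRU}(\mZ^{(l)}, \mA \mV) = \mP \mZ^{(l+1)}$, establishing set-equivariance. I expect the main obstacle to be the inductive argument for the Sinkhorn operator: one must verify that the column-normalization step is genuinely unaffected by the row permutation (relying on summation over rows being order-independent) while the row-normalization step carries the permutation through its permuted marginal. Once this equivariance of $\sink$ under paired row permutations is pinned down, the remaining steps are routine.
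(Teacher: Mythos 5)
Your proof is correct and takes essentially the same route as the paper's: the paper's own proof simply notes that slot attention is already set-equivariant (citing prior work) and that Sinkhorn adds only sum, broadcast, and elementwise operations, so composition of set-equivariant maps preserves set-equivariance. Your version makes this argument fully explicit---tracing the permutation through the queries, the cost matrix, the alternating Sinkhorn normalizations (by induction), and the GRU---and additionally handles the learned marginals $\va$ and $\vb$ of the non-square case, a detail the paper's proof leaves implicit under its ``elementwise operations'' umbrella.
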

\begin{proof}
Slot attention is set-equivariant \citep{locatello2020object}.
All the additional operations in SA-SH are set-equivariant for a similar reason to Deep Sets \citep{zaheer2017deep}: only sum, broadcast, and elementwise operations are used in Sinkhorn.
Since composition of set-equivariant operations maintains set-equivariance, SA-SH is set-equivariant.
\end{proof}

\begin{proposition}
SA-EMD and SA-MESH are exclusively multiset-equivariant.
\end{proposition}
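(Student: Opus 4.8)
The goal is to establish both halves of \emph{exclusive} multiset-equivariance: that each of SA-EMD and SA-MESH is multiset-equivariant, and that neither is set-equivariant. I would prove these two halves separately, reusing the set-equivariance structure already developed for the SA-SH proposition.

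For multiset-equivariance, the plan is to isolate the single non-set-equivariant stage. Writing the slot update as a composition $\mZ \mapsto \mC \mapsto \mA \mapsto \mA\mV \mapsto \mathrm{GRU}$, every stage except the cost-to-plan solver ($\mathrm{emd}$ or $\mathrm{MESH}$) is set-equivariant in the slot (row) index by the same sum/broadcast/elementwise argument used for SA-SH. It then suffices to show that the solver itself is multiset-equivariant with respect to row permutations of $\mC$, and that this property passes through the surrounding set-equivariant maps to the slot index (the point I return to as the main obstacle). For $\mathrm{emd}$, I would argue that permuting the rows of $\mC$ by $\mP$ permutes the optimal-transport polytope by the same $\mP$, so the solver returns a vertex of the permuted polytope; because the objective and marginal constraints are invariant under \emph{jointly} permuting cost rows and plan rows, any mismatch between the returned vertex and $\mP\,\mathrm{emd}(\mC)$ can only involve rows of $\mC$ that are equal, which is exactly the freedom encoded by an output permutation $\mP_2$ with $\mP\mC = \mP_2\mC$. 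For $\mathrm{MESH}$, I would observe that, once the noise $\bm\eps$ is fixed and carried along with the slots, the entire entropy-minimizing gradient descent together with its final $\sink$ is assembled from set-equivariant primitives and is therefore set-equivariant in the rows; the multiset (rather than set) behaviour enters only through the noise injection.

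For non-set-equivariance, I would exhibit the standard tiebreaking witness. Take two identical slots $\mZ_i = \mZ_j$. Set-equivariance would force $f(\mZ) = \mP f(\mZ)$ for the transposition $\mP = (i\,j)$ (since $\mP\mZ = \mZ$), hence identical output rows $i$ and $j$. But for SA-EMD the solver returns a sparse vertex that assigns the two tied slots to \emph{different} inputs, and for SA-MESH the noise perturbs the two equal cost rows to distinct rows whose entropy-minimized plan again sends them to different inputs; in both cases output rows $i$ and $j$ differ, contradicting set-equivariance. Choosing $\mP_2 = \mI$ shows these same configurations remain consistent with multiset-equivariance, so the two methods are exclusively multiset-equivariant.

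The main obstacle is making the $\mathrm{MESH}$ argument rigorous inside a framework defined for \emph{deterministic} functions: I must decide whether to treat $\bm\eps$ as auxiliary data tied to the slots (so permutations act on it too) or to argue a distributional version, and then verify that the reshuffling triggered by $\bm\eps$ stays within a block of equal slots. A related subtlety is that the solver's freedom lives among \emph{equal cost rows}, whereas multiset-equivariance requires the chosen output permutation to fix $\mZ$; since equal slots always yield equal cost rows but not conversely, I would need either to assume (generically true) that the row map $\mZ_i \mapsto (d(\mZ_i\mW_Q, \mK_j))_j$ separates distinct slots, or to restrict $\mP_2$ to act only within blocks of genuinely equal slots. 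I expect this bookkeeping, rather than any single computation, to be the crux of a fully rigorous proof.
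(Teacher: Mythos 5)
Your proposal takes essentially the same route as the paper's proof: a tiebreaking counter-example with tied slots (the paper uses an all-ones $2\times 2$ cost matrix) to rule out set-equivariance, and an ``optimal solutions are preserved up to permutation'' argument for multiset-equivariance---optimality of the transport polytope vertex for EMD, and fixed-noise continuity for MESH---so the two proofs match in structure and key ideas. The obstacles you flag at the end (treating the noise $\bm\eps$ rigorously in a deterministic-function framework, and the mismatch between equal cost rows and genuinely equal slots once the solver is composed with the slot-indexed GRU) are real, but the paper's own proof does not resolve them either; it argues informally at the level of the cost matrix and transport map and leaves exactly that bookkeeping implicit.
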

\begin{proof}
    To show exclusive multiset-equivariance, we need to show that they are not set-equivariant, but still multiset-equivariant. We begin with the former.
    
To show that SA-EMD and SA-MESH are not set-equivariant, it is enough to give a counter-example. Suppose we have the following cost matrix: 

\begin{equation}\label{eq:symmetric input}
    \begin{bmatrix}
        1 & 1 \\
        1 & 1 \\
    \end{bmatrix}
\end{equation}

Running EMD gives us one of the following transport maps as the solution, depending on the arbitrary tiebreaking in the EMD implementation.

\begin{equation}
    \begin{bmatrix}
        1 & 0 \\
        0 & 1 \\
    \end{bmatrix}
    \quad \text{or} \quad
    \begin{bmatrix}
        0 & 1 \\
        1 & 0 \\
    \end{bmatrix}
\end{equation}

Both have zero entropy, hence they are also possible solutions when the MESH objective is perfectly optimized.
Set-equivariance requires that a permutation applied to the input in \autoref{eq:symmetric input} changes the output by the same permutation. As pointed out by \citet{zhang2022multisetequivariant}, this does not happen because the arbitrary tiebreaking remains the same. 
Therefore, both SA-EMD and SA-MESH are not set-equivariant.

To show that they are multiset-equivariant, first recall the definition of multiset-equivariance.
\begin{equation}
\begin{split}
     \forall \mX \in \R^{n \times c}, \forall \mP_1 \in \Pi, {\exists \mP_2 \in \Pi}: \\
     f(\mP_1 \mX) = {\mP_2} f(\mX) {~\land~\mP_1 \mX = \mP_2 \mX}.
\end{split}
\end{equation}

EMD produces a solution with the minimum total cost by definition. This means that the transport map must remain the same, up to permutation.
This is because if any of the values in the transport map were to change (aside from being permuted), then the original solution was not a minimum, which is a contradiction. Therefore, we know that a $\mP_2$ must exist for any $\mP_1$.


In the same way with MESH, with infinitesimally small noise, the only thing that can change is the permutation of the solution: if no ties are broken the noise has virtually no effect because the subsequent operations in MESH are continuous, if a tie is broken then the ordering of the tie is random.
In either case, we can again always find a $\mP_2$ for every $\mP_1$ on the inputs, since the values in the solution remain the same up to permutation.

We have thus shown that SA-EMD and SA-MESH are not set-equivariant, but are multiset-equivariant (i.e. exclusively multiset-equivariant).
\end{proof}

\section{Sinkhorn algorithm implementation}\label{app:sinkhorn}
Ideally, we want to compute the Sinkhorn algorithm for as few steps as possible since it is used in every MESH step, which means that we also have to differentiate through the Sinkhorn algorithm in every MESH step.
However, we also need to run it for a sufficient number of steps for (good enough) convergence.
Fortunately, because we repeatedly run the Sinkhorn algorithm on \emph{similar} inputs over different MESH steps, we can optimize its implementation.
The idea is that the gradient descent for minimizing the entropy makes small changes (especially near MESH convergence), which allows us to reuse computation between different MESH steps.

\begin{figure}[htpb!]
\centering
\includegraphics[width=\linewidth]{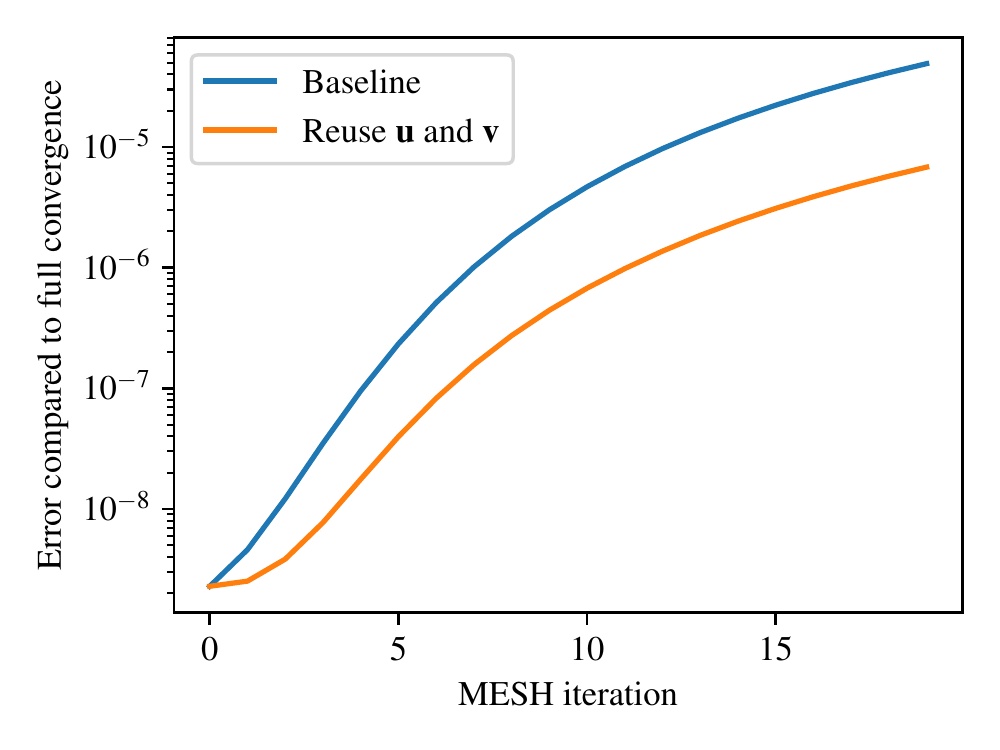}
\caption{Mean absolute error gap to the fully converged Sinkhorn for the two different Sinkhorn implementations for varying numbers of MESH iterations. Both implementations always use 5 Sinkhorn iterations and thus have comparable computational costs. Reusing $\vu$ and $\vb$ is clearly more effective.}
\label{fig:approximation}
\end{figure}

If the entropy minimization has converged, then the Sinkhorn output does not change.
This means that if we keep track of the operations we performed in the previous SA-MESH iteration, we can simply reapply them.
Let us take a look at these operations:
the Sinkhorn algorithm (multiplicatively) rescales rows and columns repeatedly.
Multiplication is commutative, so we can collect all the row normalizations together into a single row normalizer $\vu$ and all the column normalizations into a single column normalizer $\vv$.
Applying these two normalizers on the (exponentiated) cost matrix gives us exactly the same result as if we had manipulated the matrix with the normalizations directly.
The next time we run the Sinkhorn algorithm with a slightly changed cost matrix, we can bootstrap the algorithm with the previously-found $\vu$ and $\vv$.
We can then run a few more iterations to account for the changes in the cost matrix, giving us a new $\vu$ and $\vv$.

In \autoref{fig:approximation}, we ablate whether there are benefits to reusing $\vu$ and $\vv$ in SA-MESH. In particular, we compute the ideal solution by running the Sinkhorn algorithm until convergence at every ME iteration. Then we examine whether reusing $\vu$ and $\vv$ helps close the gap to the ideal solution when we limit the number of SH iterations.
At 1 ME iteration there are no $\vu$ and $\vv$ for bootstrapping available, so both exhibit the same gap.
At more than 1 ME iteration we observe that reusing $\vu$ and $\vv$ helps in narrowing the gap to the ideal solution, or equivalently, can achieve the same approximation with fewer iterations.


We do not claim that this technique of collecting Sinkhorn operations into $\vu$ and $\vv$ is novel, as there are several implementations that use this trick for performing the Sinkhorn algorithm.
Usually, it is a minor implementation detail since the approaches of manipulating the matrix directly and collecting normalizations into $\vu$ and $\vv$ are mathematically equivalent.
In our case however, this formulation leads to a concrete benefit due to our setup where we run the Sinkhorn algorithm on similar inputs, which allows us to reuse $\vu$ and $\vv$ in a beneficial way.

\section{Gradients of tempered SH and MESH, without normalizing their scale}\label{app:tempsh vs mesh}
In \autoref{fig:grad2}, we show the same gradient norms as in \autoref{fig:grad}.
However, rather than normalizing each model to have a maximum of 1 (which is more useful for visualizing them all at once), we maintain their native scaling.
This shows that changing the SH temperature has a major effect on the scale of the gradients while changing the MESH learning rate only has minor effects on the gradients (but still significant effects on entropy reduction).
In other words, the amount of entropy minimization can be changed without major impacts on the hyperparameters of other parts of the network.

\begin{figure*}
    \centering
    \includegraphics[width=0.49\linewidth]{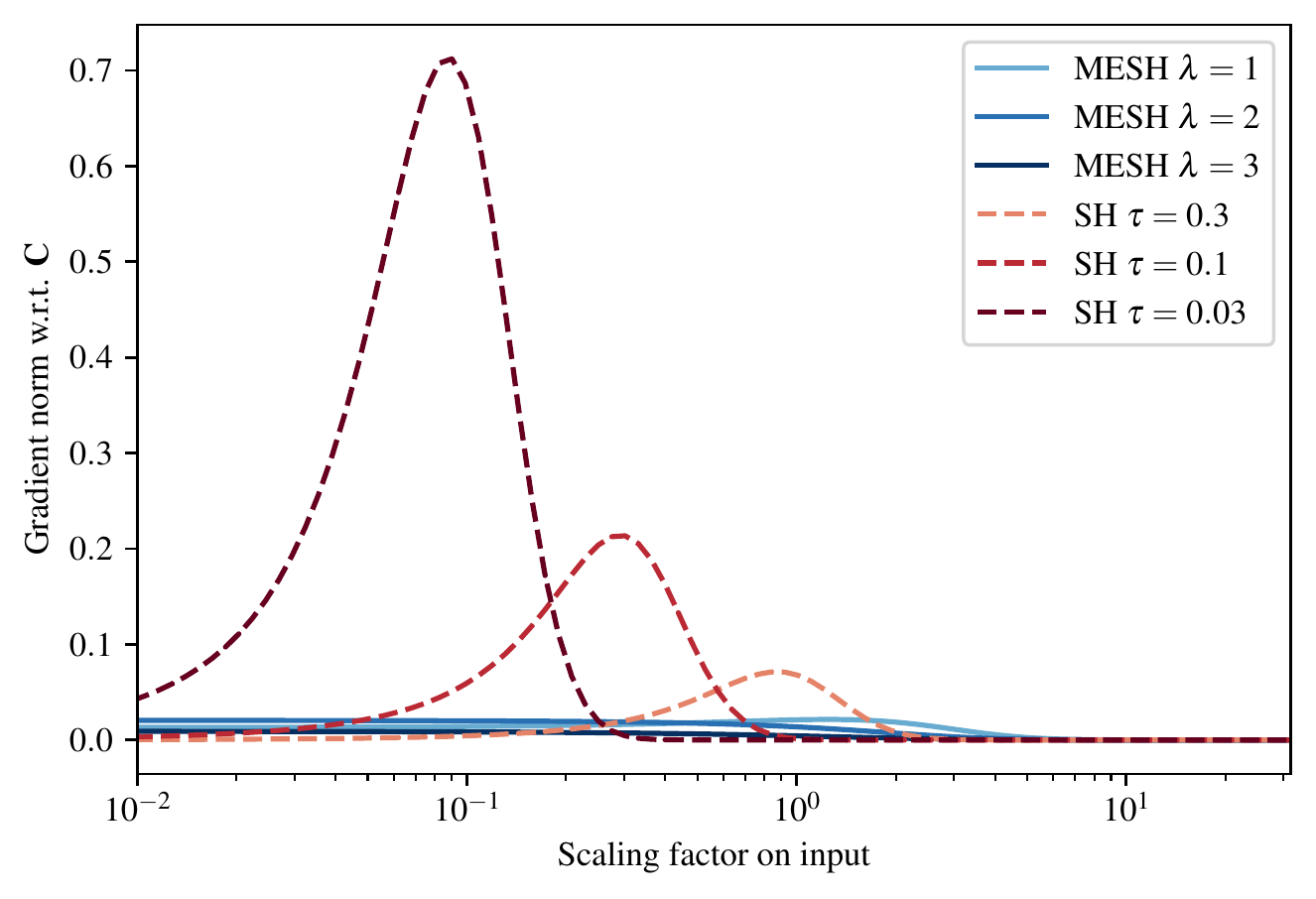}
    \includegraphics[width=0.49\linewidth]{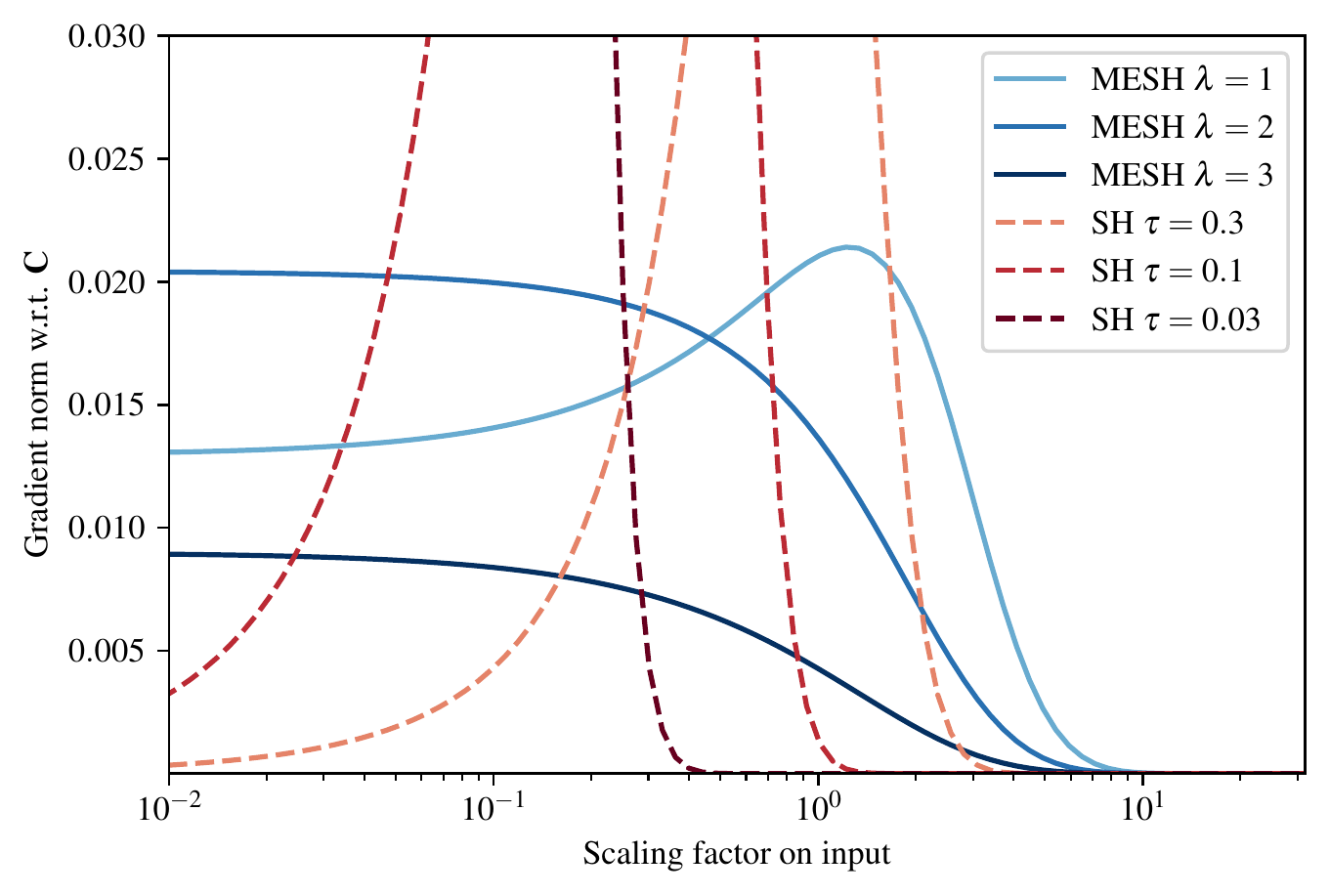}
    \caption{Same gradient norm plot as \autoref{fig:grad} but without normalizing each model to have a maximum of 1. The only difference between left and right is the scale of the y-axis. This shows that the gradient norm for tempered SH varies drastically with temperature, while the gradient norm of MESH remains more similar across learning rates. This makes tuning the learning rate hyperparameter easier as it can be considered more independently from other hyperparameters of the model.}
    \label{fig:grad2}
\end{figure*}

\section{Experimental details}\label{app:experiment details}
\subsection{Random object detection}
We generate a dataset of 64,000 data points to train on, each being a multiset with five 32-dimensional objects sampled from $\mathcal{N}(0, \sigma^2 \mI)$ and 100 zero vectors.
We directly apply slot attention on this: the dimensionality of the slot attention weights are all 32.
Since we know that there are always five objects, we set the number of slots to five.
The loss is computed by computing a mean squared error between all pairs of predicted and ground-truth objects, then using the Hungarian algorithm find the matching with the lowest loss. 
We find that using implicit differentiation of slots \citep{chang2022object} is significantly more stable on this dataset, so we use it for all models.
We train all models for 20 epochs with a batch size of 64 (1,000 steps each epoch).

\subsection{Object detection on CLEVR}
We largely follow same training setup as DSPN and iDSPN \citep{zhang2019dspn, zhang2022multisetequivariant} and adapt the slot attention implementation to it.
Matching \citet{zhang2022multisetequivariant} and \citet{locatello2020object}, we resize the input images to 128x128.
To compute the loss, we use the Hungarian algorithm to compute the least-cost matching between predicted objects and ground-truth objects.

\subsection{Object discovery} \label{app:object discovery details}
We compute the temporal consistency (TC) by first matching the predicted objects in each frame to their corresponding ground-truth objects. Then, we calculate the proportion of objects that have the same slot in both frames out of those that appear in both frames.
To find the optimal assignment between the predicted and ground-truth objects, we first compute all pairwise IoUs between the predicted and ground-truth masks. We then invert these IoUs by applying $1 - \text{IoU}$ and use the Hungarian matching algorithm to find the best match---the one with the highest mIoU.

We compute the mIoU in a similar manner as the TC metric, by finding the matching between the ground-truth objects and the predicted objects that results in the highest mIoU.

\paragraph{Multi-dSprites.}
We closely follow the experimental setup described by \citet{locatello2020object}. Specifically, we use the same image encoder, decoder, and hyperparameters where applicable. \citet{locatello2020object} used 500k training steps, while all of our runs were trained for 530 epochs, which results in slightly fewer than 500k training steps.

\paragraph{ClevrTex.}
We pre-process the images by applying the same center crop as suggested by~\citet{clevrtex} and resize the images to 64$\times$64 resolution instead of 128$\times$128 resolution. This allows us to use the same neural network architecture as we did for the Multi-dSprites dataset.

Since the dataset has more complicated visuals we increase the model size by increasing the channel sizes. In particular, we double the number of channels in the image encoder and decoder to 64, and we double the dimensions of the slots to 128 (with the MLP in slot attention having an intermediate dimension of 256). We again train all models for 530 epochs which correspond to around 330k gradient update steps in this case. The maximum number of objects in an image is 10, so we set the number of slots to 11.

\subsection{CLEVRER} \label{app:clevrer details}
We construct the datasets from the 20k videos in the CLEVRER dataset, by picking two frames at specific timesteps from each video. In the CLEVRER-S dataset, we use the first and 16th video frames. In the second dataset, we use the first and 128th (last) frames. The proportion of examples where new objects appear increases with the time gap between the two frames, and similarly the amount of displacement for objects that are in both frames increases too.
We show examples from the two dataset variants in \autoref{fig:clevrer examples}.

For the distance function $d$ that computes the cost matrix of the optimal transport problem in SA-MESH, we empirically find that the cosine distance works better in this case than the $l2$ distance. We suspect that since the $l2$ distance allows the slots to be pushed arbitrarily far apart that learning might slow down in the later stages of training. 

We extend the model which we used in the Multi-dSprites experiment to video data. Our setup is similar to \citet{kipf2022conditional}, but we do not use a predictor model (except for the SA learned noise baseline) to update the slots when transitioning from one video frame to the next.
In particular, the model first applies the image encoder to all video frames independently to compute the input feature maps. Next, the SA (or our proposed variants) is applied to the features of one video frame at a time, and every time the slots are initialized from the slots of the previous frame. Finally, each image is decoded independently. The learned noise baseline uses a 2-layer MLP with LayerNorm to predict the mean and variance of a Gaussian, from which the initial slots are sampled for the next frame, following the stochastic SAVi setup by \citet{wu2022slotformer}.
We use 8 slots.

\section{CLEVR object prediction results}\label{app:clevr extra}
In \autoref{tab:state appendix} we show our results for CLEVR object prediction without implicit differentiation of slots \citep{chang2022object}.
All results are slightly lower than the results reported in \autoref{tab:state}, but the overall message remains exactly the same. The only major difference is that SA-SH performs much worse compared to SA-SH$\dagger$.

\begin{table*}[t]
    \centering
    \caption{
        Results on CLEVR object property multiset prediction,
        average precision (AP) in \% (mean $\pm$ standard deviation) over 5 random seeds, higher is better.
        All SA results are based on our re-implementation.
        SA (original) results copied from \citet{locatello2020object}, iDSPN results from \citet{zhang2022multisetequivariant}.
    }
    \label{tab:state appendix}
\resizebox{0.90\textwidth}{!}{
    \begin{tabular}{l *{6}{d{2.4}}c}
        \toprule
        \mc{Model} & \mc{AP\textsubscript{$\infty$}} & \mc{AP\textsubscript{1}} & \mc{AP\textsubscript{0.5}} & \mc{AP\textsubscript{0.25}} & \mc{AP\textsubscript{0.125}} & \mc{AP\textsubscript{0.0625}} & Time \\

        \midrule
        \textit{iDSPN} \cite{zhang2022multisetequivariant} & \spmi{98}{8}{0.5} & \spmi{98}{5}{0.6} & \spmi{98}{2}{0.6} & \spmi{95}{8}{0.7} & \spmi{76}{9}{2.5} & \spmi{32}{3}{3.9} & \mc{---}\\
        SA (original) \cite{locatello2020object} & 94.3\spm{1.1} & 86.7\spm{1.4} & 56.0\spm{3.6} & 10.8\spm{1.7} & 0.9\spm{0.2} & \mc{---} & \mc{---}\\
        \midrule
        SA & 89.1\spm{1.2} & 85.7\spm{1.0} & 73.3\spm{1.2} & 35.4\spm{1.5} & 9.0\spm{0.8} & 2.0\spm{0.3} & 2.4 h \\
        SA-SH & 95.6\spm{1.0} & 94.0\spm{1.1} & 84.5\spm{1.7} & 41.3\spm{3.0} & 10.4\spm{0.7} & 2.5\spm{0.4} & 2.5 h\\
        SA-EMD & 99.2\spm{0.2} & 98.7\spm{0.4} & 97.0\spm{0.8} & 82.4\spm{1.2} & 34.0\spm{2.2} & 8.3\spm{0.9} & 9.7 h \\
        \textbf{SA-MESH} & \boldc{99.2\spm{0.3}} & \boldc{99.1\spm{0.3}} & \boldc{98.8\spm{0.5}} & \boldc{88.3\spm{0.8}} & \boldc{40.8\spm{1.0}} & \boldc{10.6\spm{0.3}} & 2.5 h \\
        \bottomrule
    \end{tabular}%
}

\end{table*}

\section{Extra ablations}\label{app:extra experiments}
\paragraph{Additional slot attention iterations}
In general the benefit of more iterations is minor (see ablations in \citet{locatello2020object}, Appendix C) and using too many can hurt in some cases, which is why many recent works \citep{wu2022slotformer,kipf2022conditional} set the number of iterations to 3 or even fewer. We ran an additional experiment where we trained the plain slot attention baseline on Multi-dSprites with 5 iterations resulting in 81.9±6.1 FG-ARI, which is worse than the 92.2±0.5 achieved with 3 iterations reported in our main results. Also note that the slot attention module is only a part of the full neural network and is not the bottleneck when using larger encoders.

\paragraph{Learned slot initializations}
In our perspective, the initialization should be thought of as separate to the slot attention method itself. A different initialization does not change the fact that the slots can collapse, especially in cases like the video datasets where the initialization is not a free parameter but dependent on the previous timestep. Thus, having control over the initialization should not be relied upon.

\citet{locatello2020object} report in their Appendix B that learning the initial slots decreases the performance in unsupervised learning. We ran experiments with SA using a learned initialization on Multi-dSprites to evaluate this as well. On the Multi-dSprites dataset, SA with a learned initialization achieves 93.0±1.0 FG-ARI, which is comparable to the standard SA at 92.2±0.5 and remains lower than the 95.6±0.2 of SA-MESH.

\section{Alternative MESH objective} \label{app:MESH objective}
In~\autoref{sec:MESH} we choose the entropy as the inner objective function because the goal was to reverse the effect of the entropy-regularized optimal transport version. Alternatively, it is possible to learn a neural network with scalar inputs and outputs in place of the entropy function. For the neural network, we choose a simple 2-layer MLP with ReLU activations and 32 hidden dimensions. We plot the derivative of a learned objective function in~\autoref{fig:learned mesh objective function}. Its shape is similar to the derivative of the entropy~\autoref{fig:entropy mesh objective function}, but learning it incurs additional compute compared to simply using the entropy function $H$.

Justified by this analysis we can directly use the entropy function as the MESH objective for improved efficiency. Empirically we observe that we do not even need to backpropagate through the gradient descent optimization procedure of MESH and it suffices to treat MESH as the identity function during backprop. One perspective that might explain this: since the negative derivative of $H$ is a monotonic increasing function, changes to the input will also affect the output in the same direction.

\begingroup
    \captionsetup{type=figure}
    \centering
    \begin{subfigure}{0.23\textwidth}
        \centering
        \includegraphics[width=0.98\linewidth]{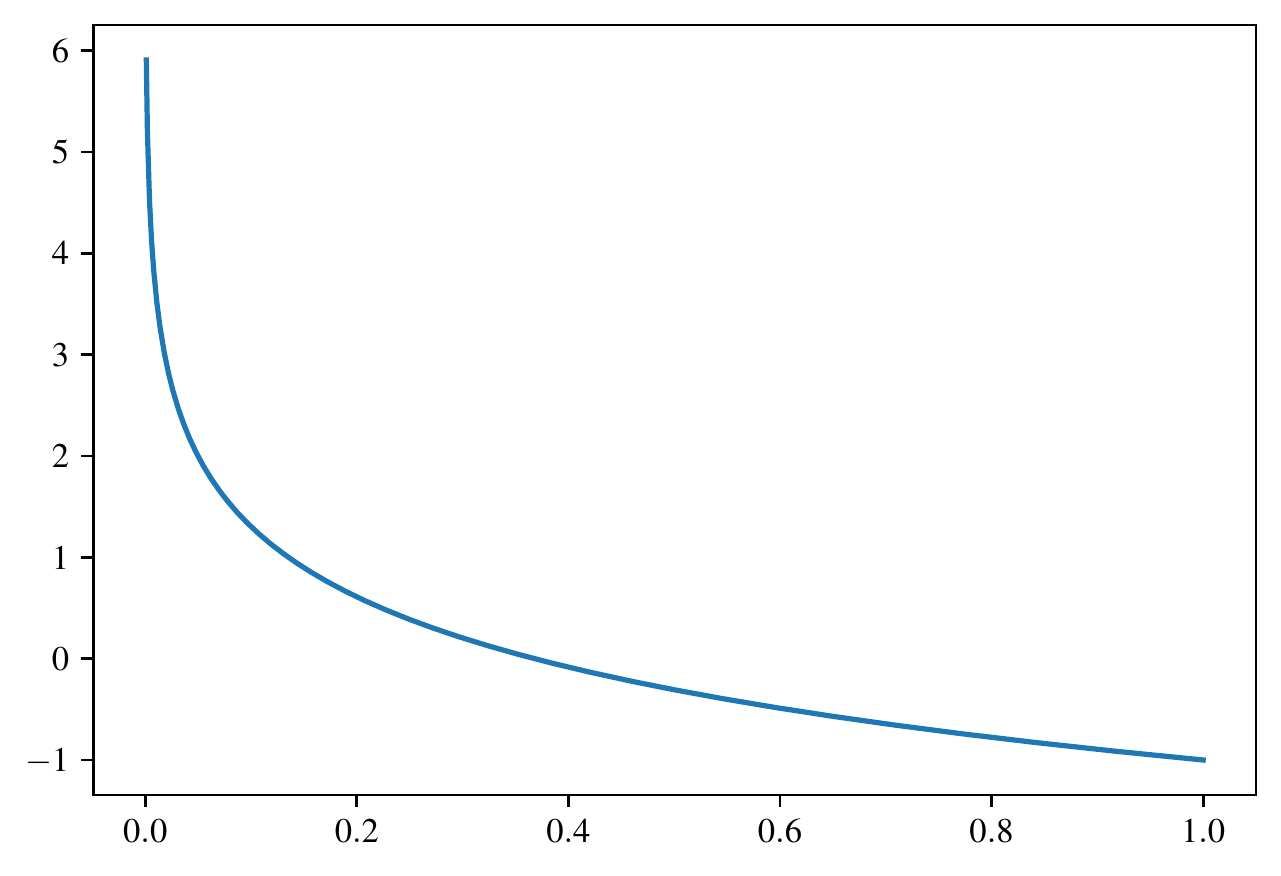} 
        \caption{$H$} \label{fig:entropy mesh objective function}
    \end{subfigure}%
    \begin{subfigure}{0.23\textwidth}
        \centering
        \includegraphics[width=0.98\linewidth]{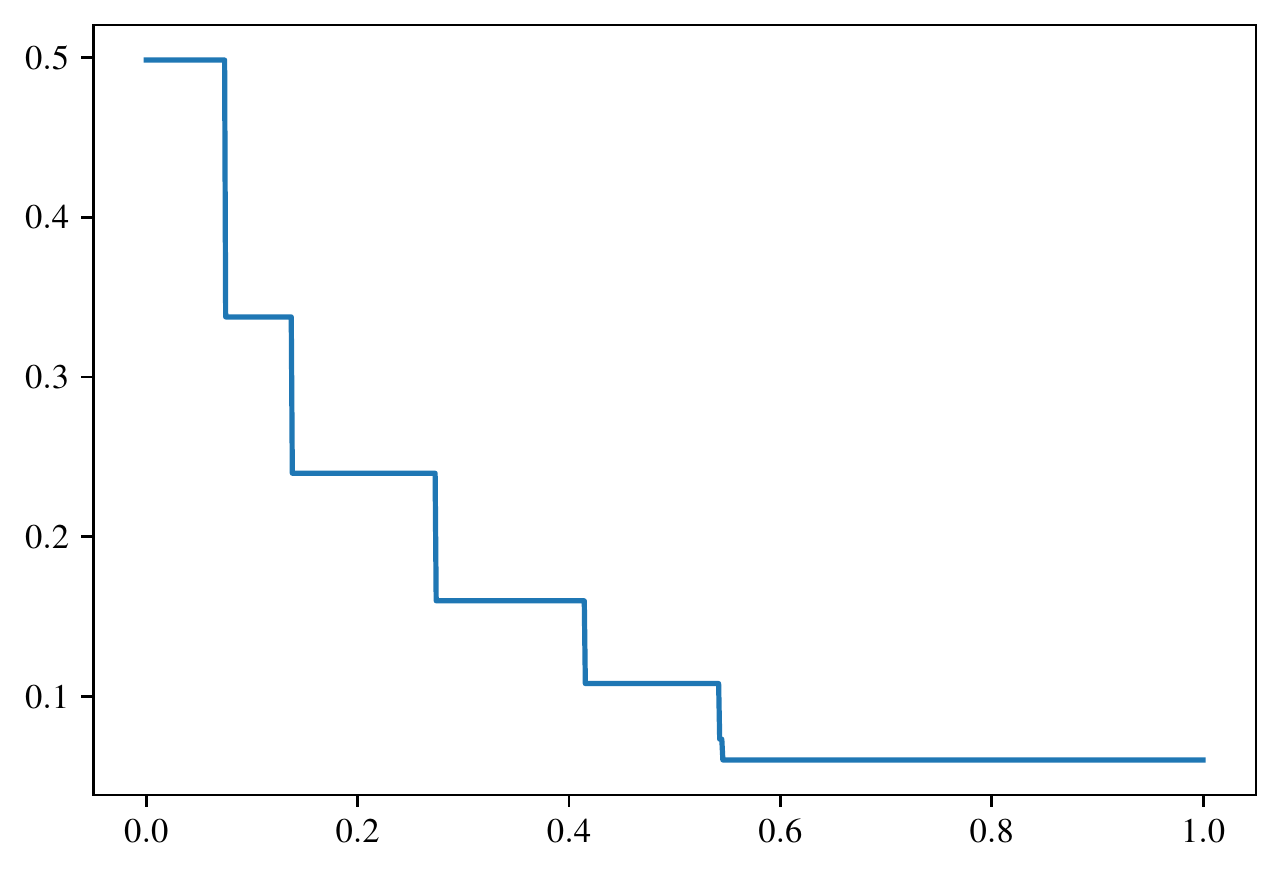}
        \caption{MLP} \label{fig:learned mesh objective function}
    \end{subfigure}
    \caption{Derivative of the objective function in MESH}
    \label{fig:mesh objective function}
\endgroup

\begin{figure*}[t]
    \centering
    \begin{subfigure}{0.4\linewidth}
        \centering    
        \includegraphics[width=0.77\linewidth]{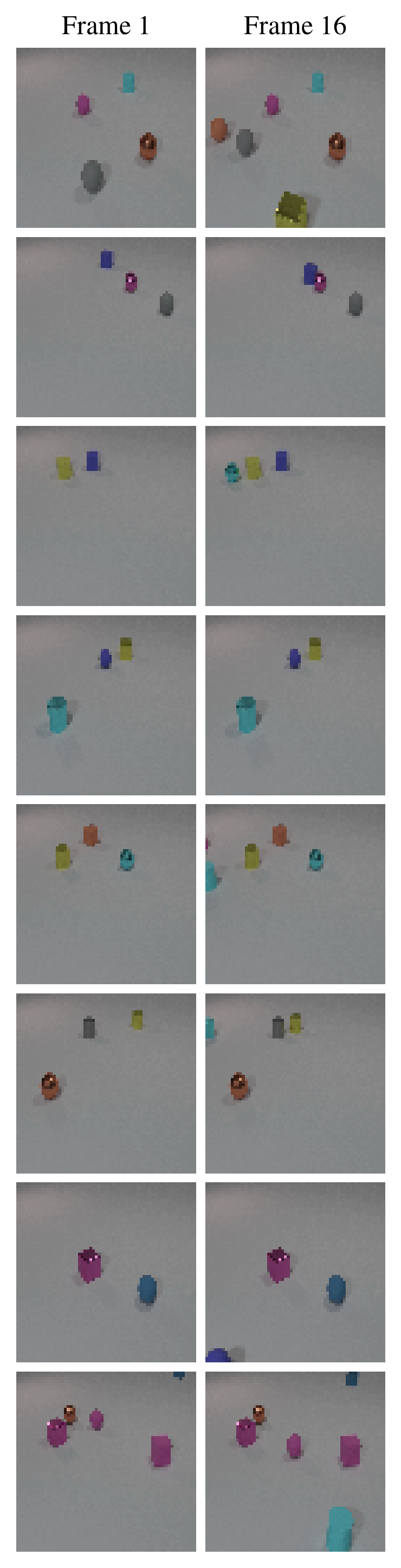}
        \caption{CLEVRER-S}
    \end{subfigure}%
    \begin{subfigure}{0.4\linewidth}
        \centering    
        \includegraphics[width=0.77\linewidth]{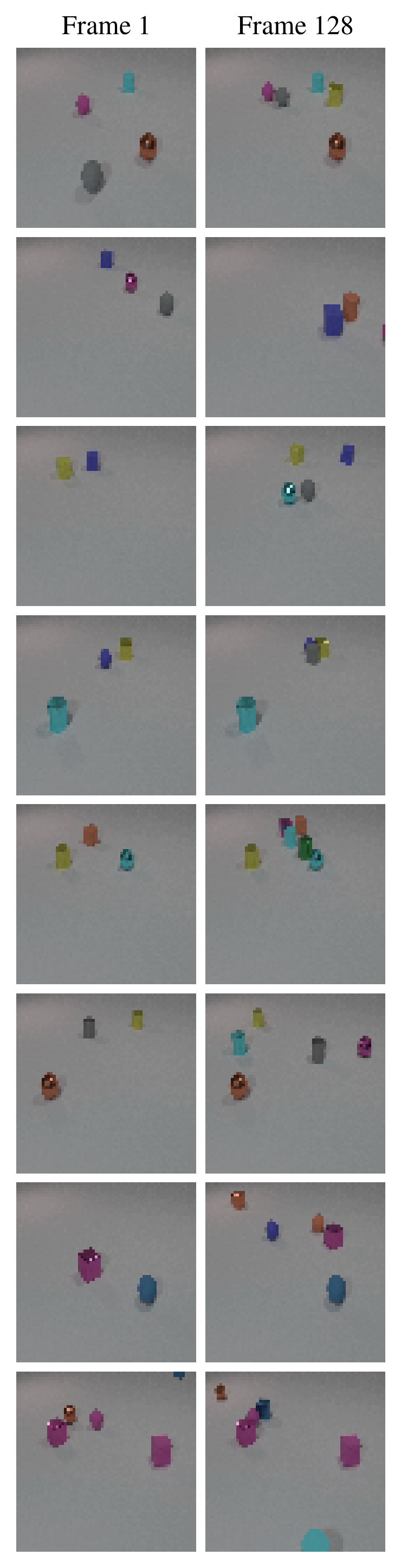}
        \caption{CLEVRER-L}
    \end{subfigure}
    \caption{Examples from the two datasets derived from CLEVRER. Significant changes like multiple new objects appearing occur less frequently in CLEVRER-S. Objects can be significantly displaced from one frame to the other in CLEVRER-L.}
    \label{fig:clevrer examples}
\end{figure*}

\section{Object discovery example results}\label{app:od examples}
In the following, we show examples of SA and SA-MESH performing object discovery on the various datasets that we use.
We always show the original image on the left, followed by either the attention maps for each slot or the final alpha masks for each slot.
The attention map or the alpha masks are multiplied with the original image to make it easier to tell how precise their locations are.

\begin{itemize}
    \item \autoref{fig:attn map all} shows the intermediate attention maps over the three slot attention iterations on the Multi-dSprites dataset.
    \item \autoref{fig:attn map all2} shows the final per-slot alpha masks on the Multi-dSprites dataset.
    \item \autoref{fig:attn map and masks clevrtex} shows the intermediate attention map of the last slot attention iteration, as well as the final per-slot alpha masks on the ClevrTex dataset.
    \item \autoref{fig:clevrer mask examples} shows the final per-slot alpha masks on the CLEVRER-S and CLEVRER-L datasets.
\end{itemize}

Please refer to the individual figure captions for a more detailed description of observations on these results.

\begin{figure*}
    \centering
    \begin{subfigure}{0.49\linewidth}
        \centering
        \includegraphics[width=\linewidth]{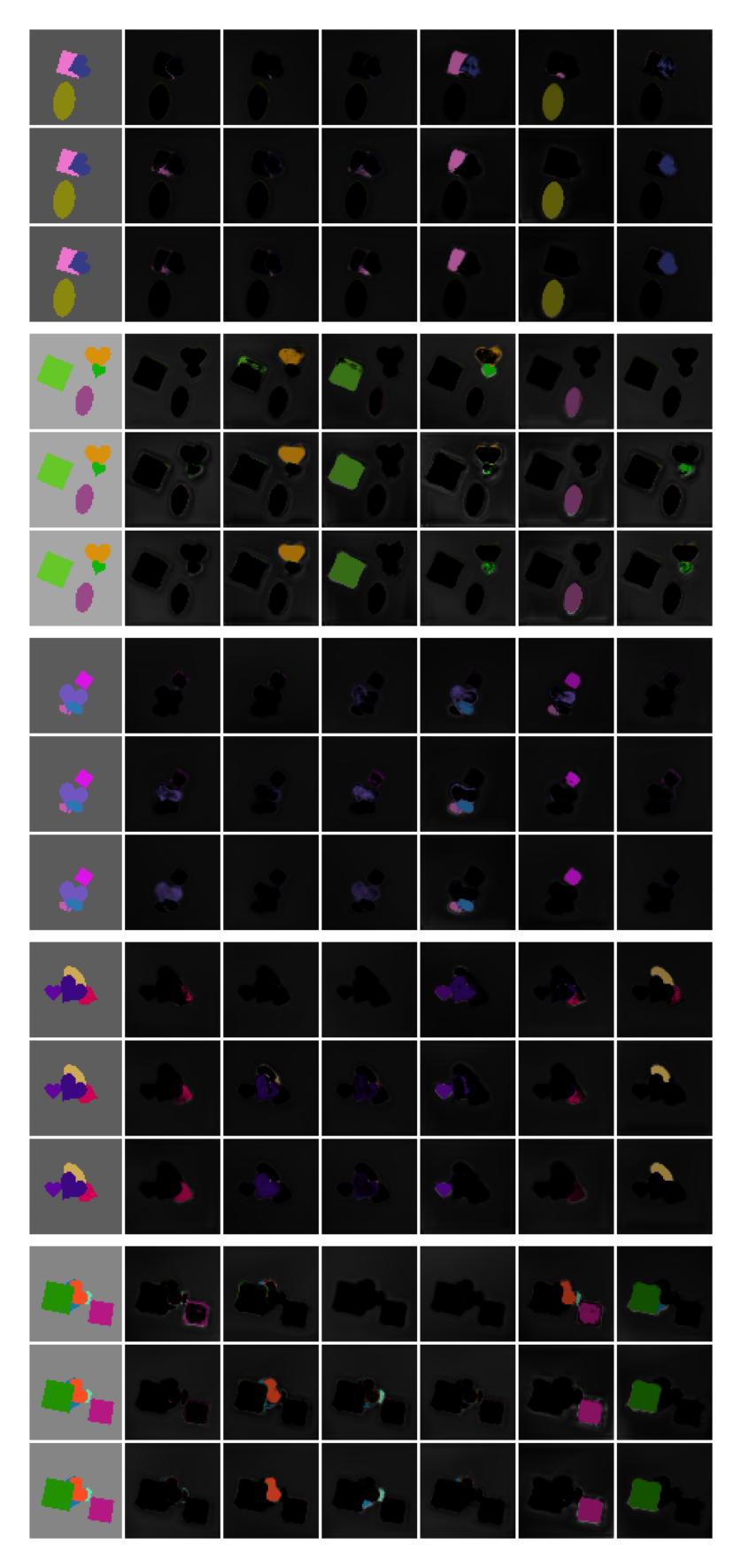}
        \caption{SA}
    \end{subfigure}%
    \begin{subfigure}{0.49\linewidth}
        \centering
        \includegraphics[width=\linewidth]{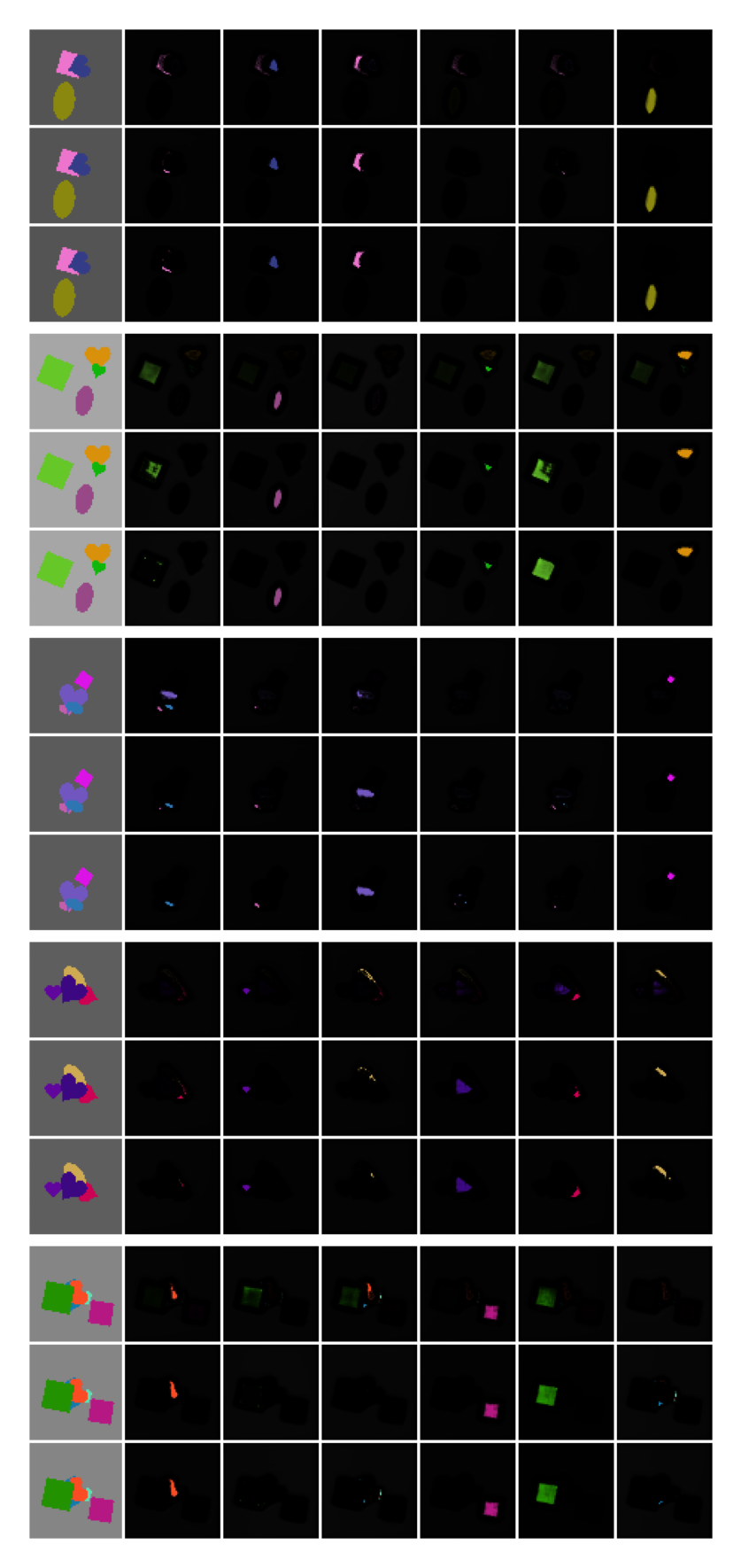}
        \caption{SA-MESH}
    \end{subfigure}%
    \caption{Attention maps for all three slot attention iterations for five different examples from the validation split of Multi-dSprites. Note how the shade of SA is generally darker indicating higher attention values even in background areas. In the third example, we can see how SA models the pink and blue ellipses using one slot while splitting the purple heart over two slots. In contrast, SA-MESH is able to route the three objects into three different slots in the second slot attention iteration.}
    \label{fig:attn map all}
\end{figure*}

\begin{figure*}
    \centering
    \begin{subfigure}{0.49\linewidth}
        \centering
        \includegraphics[width=\linewidth]{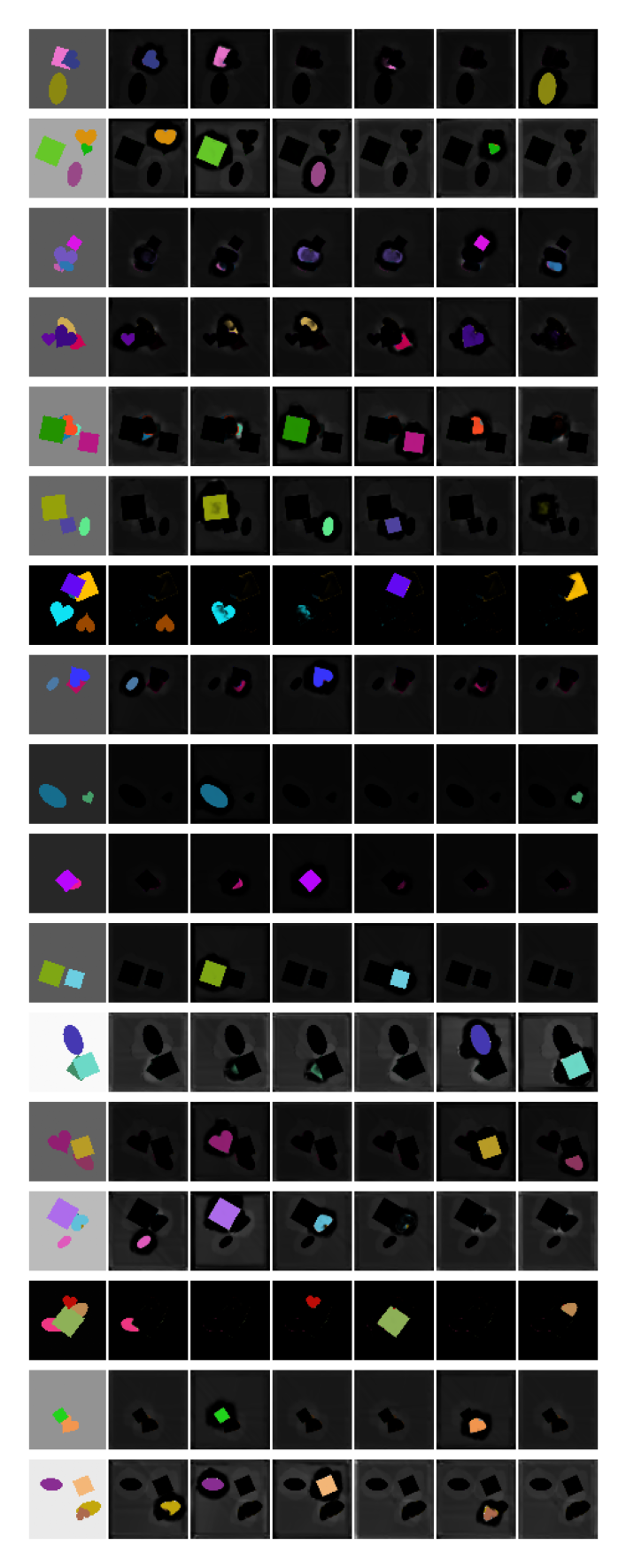}
        \caption{SA}
        \vspace{-4mm}
    \end{subfigure}%
    \begin{subfigure}{0.49\linewidth}
        \centering
        \includegraphics[width=\linewidth]{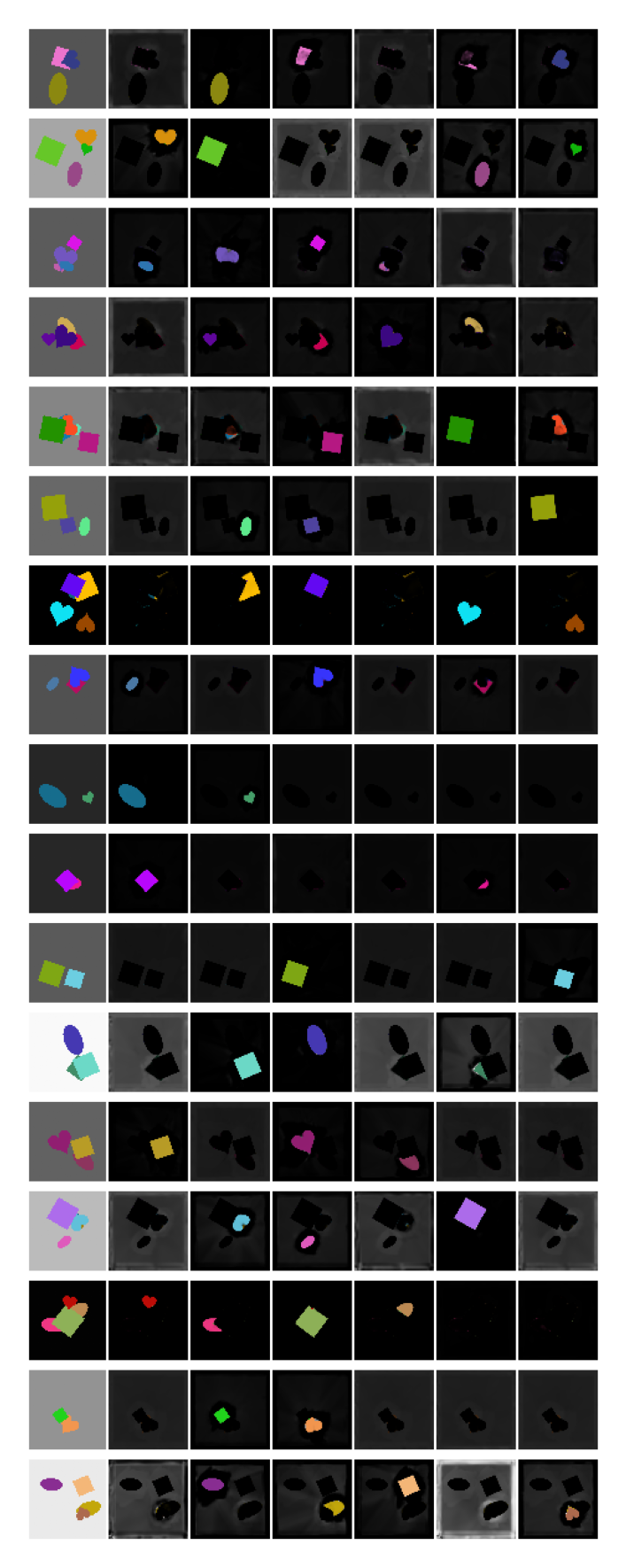}
        \caption{SA-MESH}
        \vspace{-4mm}
    \end{subfigure}%
    \caption{Predicted alpha masks from the validation split of Multi-dSprites. In general, the masks for SA-MESH are sharper than for SA. For example, in the last row, the brown heart is only recognizable as a blown blob for SA, but is a distinct heart shape for SA-MESH. In the third example, SA splits the purple heart into two slots, while SA-MESH models it with one as desired.}
    \label{fig:attn map all2}
\end{figure*}

\begin{figure*}
    \centering
    \begin{subfigure}{\linewidth}
        \centering
        \includegraphics[width=0.8\linewidth]{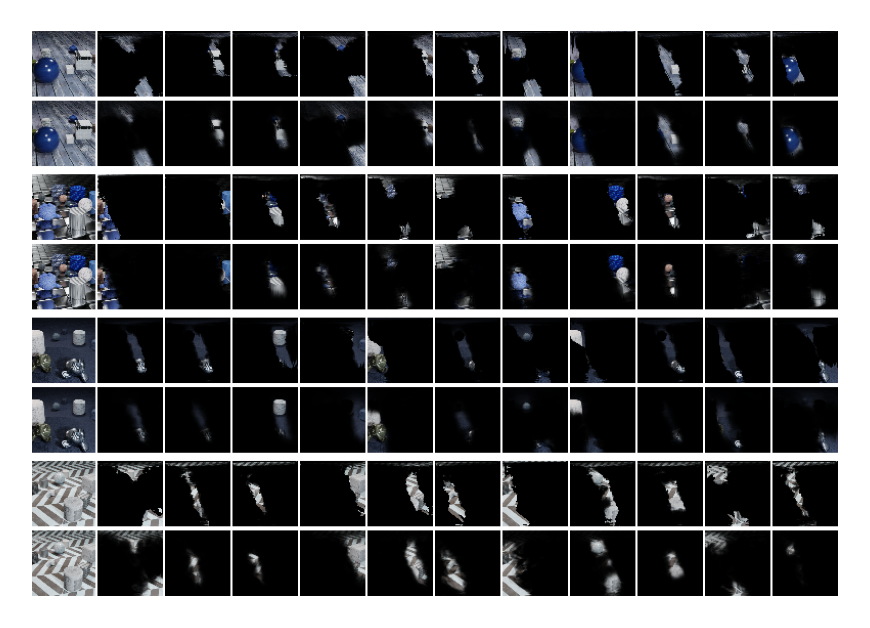}%
        \vspace{-2mm}
        \caption{SA}
    \end{subfigure}
    \begin{subfigure}{\linewidth}
        \centering
        \includegraphics[width=0.8\linewidth]{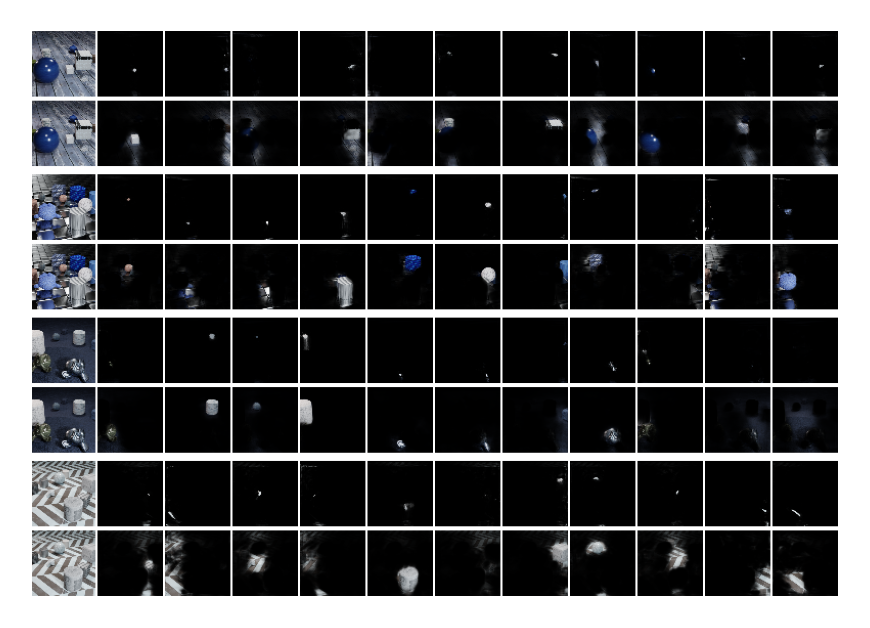}
        \vspace{-2mm}
        \caption{SA-MESH}
    \end{subfigure}
    \caption{Attention maps (top) and masks (bottom) for four different examples from the validation split of ClevrTex. SA commonly learns that each slot should attend to a spatial region as opposed to a specific object. SA-MESH on the other hand is better able to localize individual objects, though the background is often still handled with a region-specific approach. }
    \label{fig:attn map and masks clevrtex}
\end{figure*}


\begin{figure*}[t]
\centering
\resizebox{\linewidth}{!}{%
\begin{tabular}{l@{}c@{}c}
&CLEVRER-S&CLEVRER-L\\  
\rotatebox[origin=c]{90}{SA}&\raisebox{-.48\height}{\includegraphics[width=0.5\linewidth]{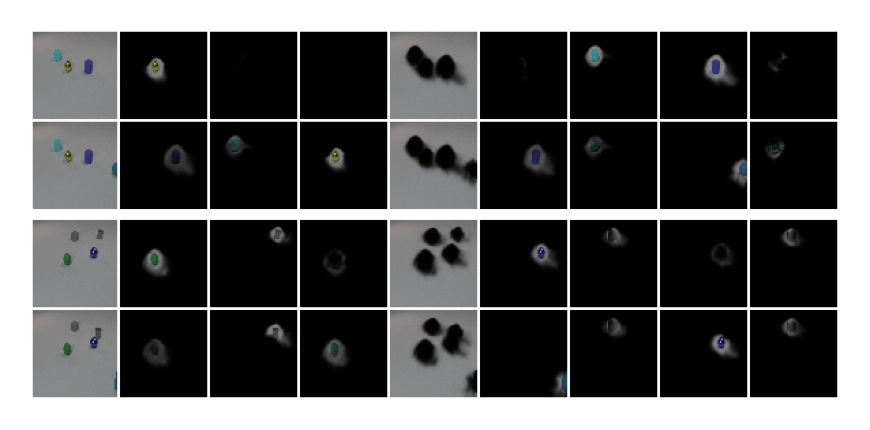}} & 
\raisebox{-.48\height}{\includegraphics[width=0.5\linewidth]{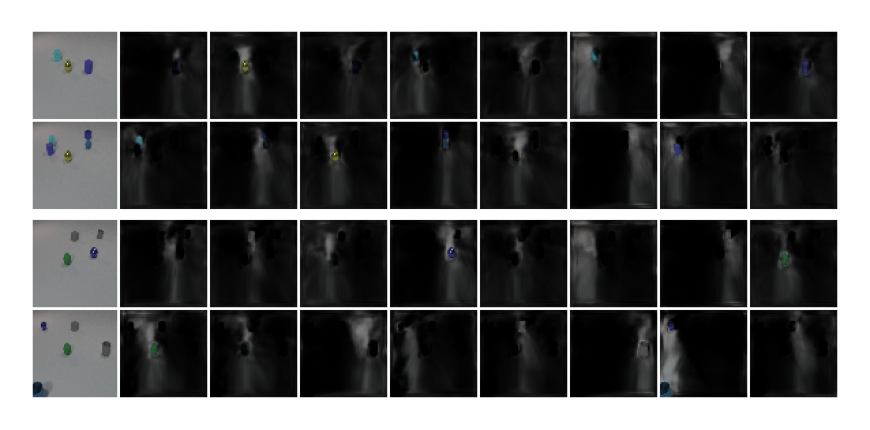}} \\
\rotatebox[origin=c]{90}{SA fixed noise}&\raisebox{-.48\height}{\includegraphics[width=0.5\linewidth]{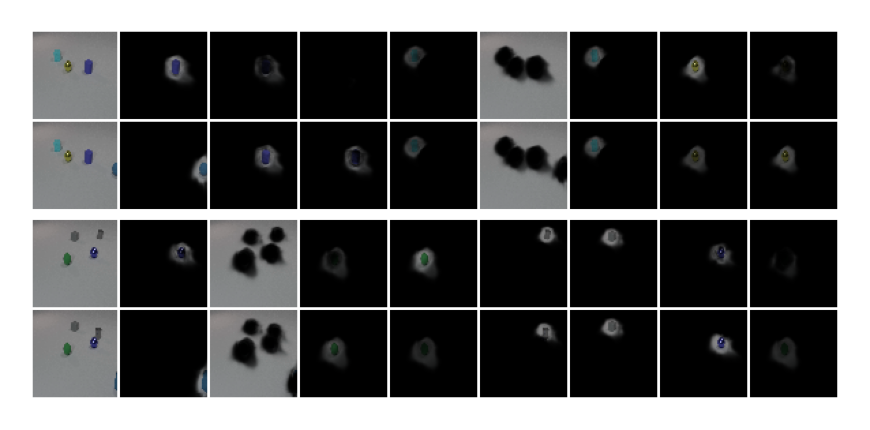}} & 
\raisebox{-.48\height}{\includegraphics[width=0.5\linewidth]{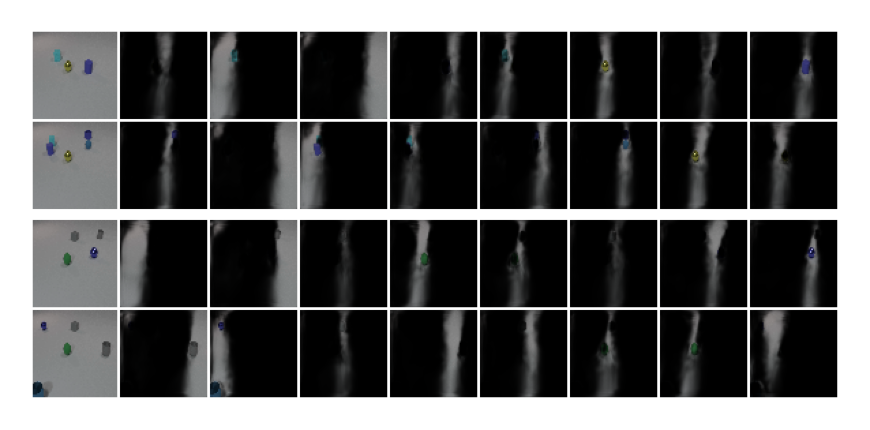}} \\
\rotatebox[origin=c]{90}{SA learned noise}&\raisebox{-.48\height}{\includegraphics[width=0.5\linewidth]{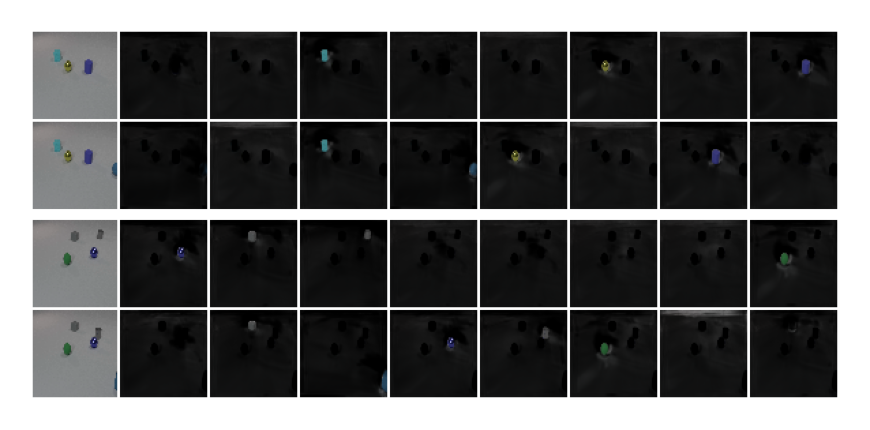}} & 
\raisebox{-.48\height}{\includegraphics[width=0.5\linewidth]{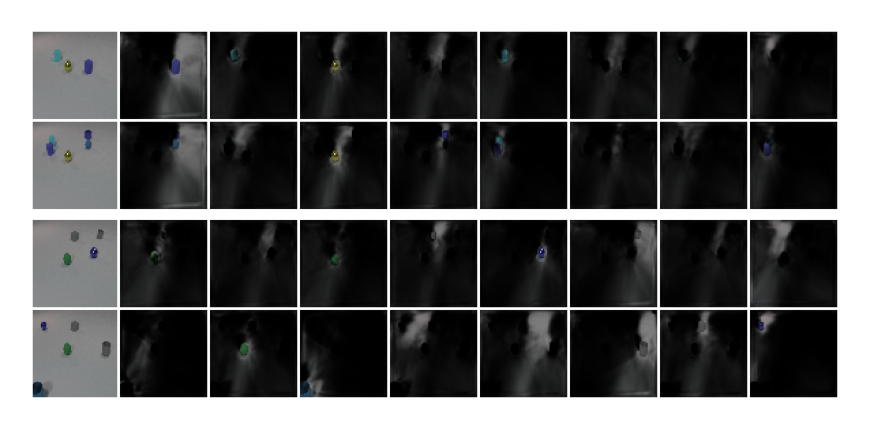}} \\
\rotatebox[origin=c]{90}{SH}&\raisebox{-.48\height}{\includegraphics[width=0.5\linewidth]{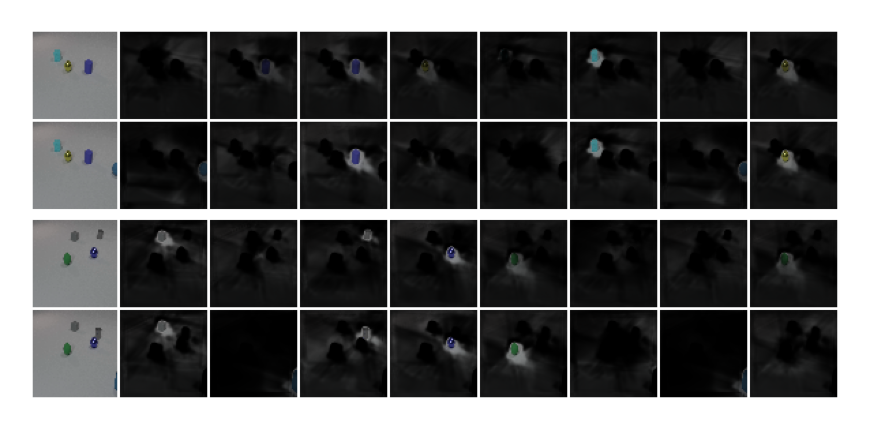}} & 
\raisebox{-.48\height}{\includegraphics[width=0.5\linewidth]{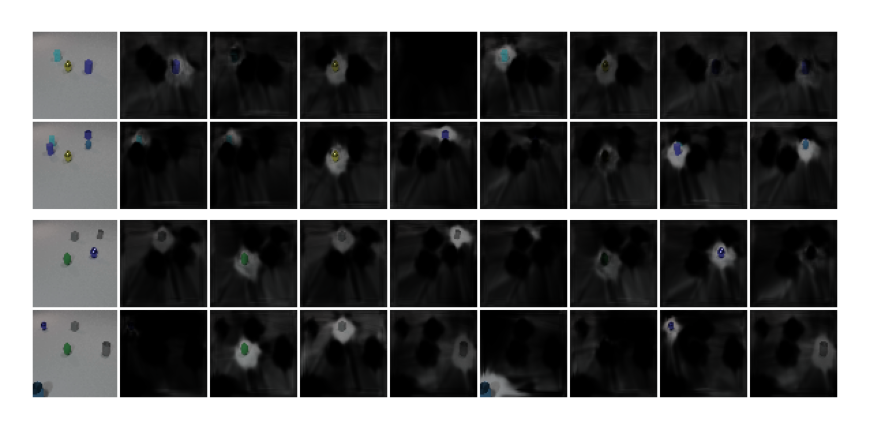}} \\
\rotatebox[origin=c]{90}{SA-MESH}&\raisebox{-.48\height}{\includegraphics[width=0.5\linewidth]{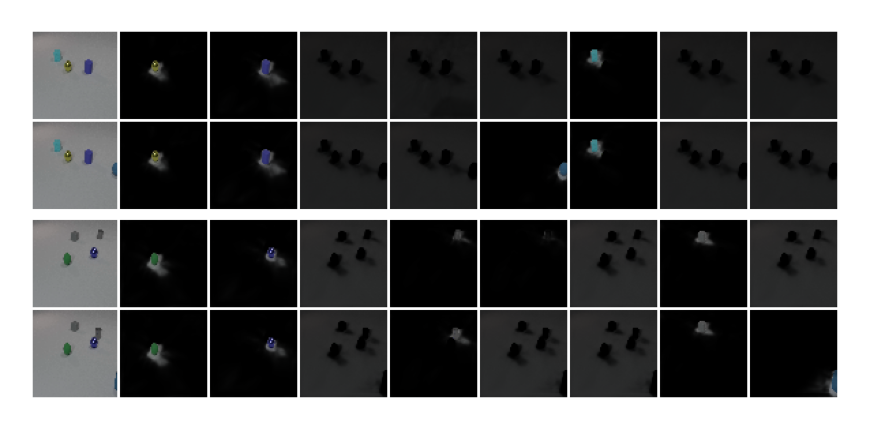}} & 
\raisebox{-.48\height}{\includegraphics[width=0.5\linewidth]{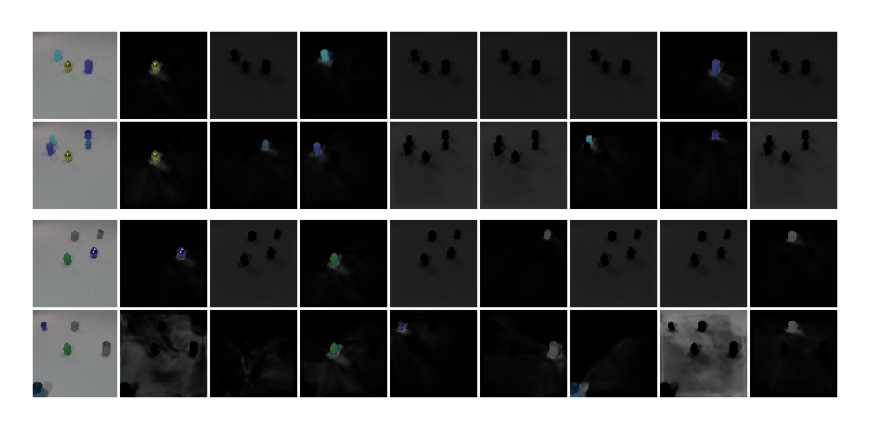}}
\end{tabular}%
}
\caption{Example alpha masks for CLEVRER-S and CLEVRER-L. Possibly due to the difficulty of handling multiple new objects, SA and its noise variants choose a region-based decomposition on CLEVRER-L instead of an object-based decomposition like on CLEVRER-S. SA-SH and SA-MESH learn an object-based decomposition for both datasets.} \label{fig:clevrer mask examples}
\end{figure*}

\end{document}